\documentclass[lettersize,journal]{IEEEtran}
\usepackage{cite}
\usepackage{amsmath,amssymb,amsfonts}
\usepackage{algorithmic}
\usepackage{graphicx}
\usepackage{textcomp}
\usepackage{bm}

\usepackage[utf8]{inputenc}
\usepackage{caption}
\usepackage{amsthm}
\usepackage{booktabs}
\usepackage{algorithm}
\usepackage{url}
\usepackage{color}

\usepackage[hidelinks]{hyperref}
\usepackage{balance} 
\usepackage{latexsym}
\usepackage{enumitem}
\usepackage{subcaption}
\usepackage{anyfontsize}
\usepackage{glossaries}
\glsdisablehyper
\usepackage{multirow}
\usepackage{nicefrac}
\newcommand{\etal}{\emph{et al}.}
\usepackage{pifont}

\usepackage{adjustbox}
\usepackage{tabularx}
\newcolumntype{Y}{>{\centering\arraybackslash}X}
\usepackage{newfloat}
\usepackage{listings}

\floatstyle{ruled}
\newfloat{listing}{tb}{lst}{}
\floatname{listing}{Listing}
\DeclareMathOperator*{\argmin}{arg\,min}

\newtheorem{definition}{Definition}[section]
\newtheorem{example}{Example}[section]
\newtheorem{theorem}{Theorem}[section]

\newtheorem{remark}{Remark}

\newacronym{wsu}{WSU}{\emph{Western Sydney University}}
\newacronym{CP}{CP}{\emph{Career Planning}}
\newacronym{cacgs}{CACGs}{\emph{Computer-Assisted Career Guidance Systems}}
\newacronym{AI}{AI}{Artificial Intelligence}
\newacronym{dpp}{DPP}{\emph{Degree Planning Problem}}
\newacronym{qp}{QP}{\emph{Quadratic Programming}}
\newacronym{ip}{IP}{\emph{Integer Programming}}
\newacronym{mip}{MIP}{\emph{Mixed-Integer Programming}}
\newacronym{micp}{MICP}{\emph{Mixed-Integer Convex Programming}}
\newacronym{miqp}{MIQP}{\emph{Mixed-Integer Quadratic Programming}}
\newacronym{miqcp}{MIQCP}{\emph{Mixed-Integer Quadratically Constrained Programming}}
\newacronym{milp}{MILP}{\emph{Mixed-Integer Linear Programming}}
\newacronym{ilp}{ILP}{\emph{Integer Linear Programming}}
\newacronym{uctp}{UCTP}{\emph{University-Course Timetabling Problem}}
\newacronym{csp}{CSP}{\emph{Constraint Satisfaction Problem}}
\newacronym{rcpsp}{RCPSP}{\emph{Resource-Constraint Project Scheduling Problem}}
\newacronym{rpp}{RPP}{\emph{Research Project Planning}}
\newacronym{jssp}{JSSP}{\emph{Job-Shop Scheduling Problem}}
\newacronym{wap}{WAP}{\emph{Workforce Assignment Problem}}
\newacronym{ga}{GA}{\emph{Genetic Algorithm}}
\newacronym{sa}{SA}{\emph{Simulated Annealing}}
\newacronym{ts}{TS}{\emph{Tabu Search}}
\newacronym{pso}{PSO}{\emph{Particle Swarm Optimization}}
\newacronym{ls}{LS}{\emph{Least Squares}}
\newacronym{lp}{LP}{\emph{Linear Programming}}
\newacronym{cp}{CP}{\emph{Constraint Programming}}
\newacronym{socp}{SOCP}{\emph{Second-Order Cone Programming}}
\newacronym{sfia}{SFIA}{\emph{Skills Framework for the Information Age}}
\newacronym{ers}{EdRecSys}{\emph{Educational Recommender Systems}}
\newacronym{er}{EdRecSys}{\emph{Educational Recommender System}}
\newacronym{rs}{RecSys}{\emph{Recommender Systems}}
\newacronym{iis}{IIS}{\emph{Irreducible Infeasible Subsystem}}
\newacronym{mus}{MUS}{\emph{Minimal Unsatisfiable Subset}}
\newacronym{mss}{MUS}{\emph{Maximal Satisfiable Subset}}
\newacronym{mcs}{MCS}{\emph{Minimal Correction Subset}}
\newacronym{mfs}{MFS}{\emph{Maximal Feasible Subsystem}}
\newacronym{tel}{TEL}{\emph{Technology Enhanced Learning}}
\newacronym{ai}{AI}{\emph{Artificial Intelligence}}
\newacronym{xai}{XAI}{\emph{eXplainable Artificial Intelligence}}
\newacronym{sdg}{SDG}{\emph{Sustainable Development Goals}}
\newacronym{gfi}{GFI}{\emph{Gunning Fog Index}}
\newacronym{exmip}{LLMxMIP}{\emph{Large Language Models for Explainable MIP}}
\newacronym{asp}{UDSP}{\emph{User-Desired Satisfiability Problem}}
\newacronym{ml}{ML}{\emph{Machine Learning}}
\newacronym{ca}{CA}{\emph{Combinatorial Auction}}
\newacronym{cap}{CAP}{\emph{Combinatorial Auction Problem}}
\newacronym{wsp}{WSP}{\emph{Weighted Set Packing}}
\newacronym{wdp}{WDP}{\emph{Winner Determination Problem}}
\newacronym{psplib}{PSPLIB}{\emph{Project Scheduling Problem Library}}
\newacronym{cats}{CATS}{\emph{Combinatorial Auction Test Suite}}
\newacronym{llm}{LLM}{\emph{Large Language Model}}
\newacronym{nl}{NL}{\emph{Natural Language}}

\newacronym{bfs}{BFS}{\emph{Breadth-First Search}}

\newacronym{dfs}{DFS}{\emph{Depth-First Search}}
\newacronym{dss}{DSS}{\emph{Decision Support Systems}}

\newcommand{\roger}[1]{{\color{blue}#1}}

\newcommand{\filippo}[1]{{\color{brown}#1}}

\definecolor{olive}{rgb}{0.5,0.5,0}
\definecolor{orange}{rgb}{1.0,0.65,0}
\definecolor{brown}{rgb}{0.65,0.16,0.16}

\begin{document}

\title{Constraint Reasoning and Large Language Models \\ Join Forces: Providing Contrastive Explanations for \\ Real-World Optimisation-Based Applications}


\author{
\IEEEauthorblockN{Roger X. Lera-Leri\IEEEauthorrefmark{1}\IEEEauthorrefmark{3},
Filippo Bistaffa\IEEEauthorrefmark{1},
Athina Georgara\IEEEauthorrefmark{2},
Juan Antonio Rodríguez-Aguilar\IEEEauthorrefmark{1},}

\IEEEauthorblockA{\IEEEauthorrefmark{1}Artificial Intelligence Research Institute, IIIA-CSIC},
\IEEEauthorblockA{\IEEEauthorrefmark{2}University of Southampton}
\thanks{\IEEEauthorrefmark{3} Corresponding author: Roger X. Lera-Leri (rlera@iiia.csic.es)}
}

\markboth{ArXiv}%
{Lera-Leri \MakeLowercase{\textit{et al.}}: Providing Contrastive Explanations for Optimisation-Based Applications}


\maketitle

\begin{abstract}
We focus on optimisation-based \textit{Decision Support Systems} in real-world applications. In this context, we aim to provide \textit{contrastive explanations} that answer users’ questions about recommendations based on the solution of an optimisation problem, specifically a \textit{Mixed-Integer Programming} (MIP). Addressing this endeavour is crucial to ensure human oversight of the decisions implemented, as required by international guidelines on trustworthy and responsible automated decision-making systems. We propose a novel methodology that combines constraint reasoning techniques and \textit{Large Language Models} (LLMs) to generate contrastive explanations for optimisation problems encoded as MIPs. Although LLMs have shown impressive capabilities in Natural Language processing, they can produce incorrect answers due to \textit{reasoning} limitations. To overcome these limitations, we leverage constraint reasoning techniques to distil the information necessary to produce an explanation, which is then passed as input to an LLM as the so-called \textit{graph of reasons}. Such a graph captures the semantics and the relationships among constraints of the optimisation problem. Results on real-world scheduling domains show that incorporating constraint reasoning improves explanations generated by the LLM. In particular, using the graph of reasons enables the LLM to produce natural-language explanations that include more relevant reasons while reducing errors compared to explanations generated without this structure.
\end{abstract}

\begin{IEEEkeywords}
Contrastive Explanations, Constraint Reasoning, Large Language Models, Optimisation-based Decision Support Systems.
\end{IEEEkeywords}

\section{Introduction}
\label{sec:introduction}
\glsresetall

\IEEEPARstart{I}{n} many real-world domains, decision-making processes are driven by \gls{dss} based on recommendations computed by an optimisation approach (i.e., the solution to an optimisation problem).
For instance, scheduling optimisation problems have been used to guide human decisions across various application domains, such as workforce scheduling \cite{bocewicz2023preventive}, food industry production \cite{wauters2012real}, and construction projects \cite{mirnezami2023integrated}. 
In this work, we consider two real-world scheduling problems as our test cases, both formulated as \gls{mip} models.
First, we consider the well-known \gls{rcpsp} \cite{hou2024optimize,mitsou2026ai,chen2022filtering}, a general framework that, over the years, has been employed to model a wide variety of real-world applications~\cite {hartmann2010,Burke01122006,Yousif15122021}. 
Secondly, we consider a problem in the educational domain, the \gls{dpp} \cite{lera2025computing}. This problem aims to compute a personalised bachelor's degree plan---a sequence of subjects across semesters ---that allows a student to acquire the professional skills required for their preferred job. 
Planning which subjects to undertake is a challenging and important task that significantly impacts students' professional careers.
Notice that, in contrast with the \gls{rcpsp}, which aims at minimising the makespan required to complete all activities, the focus of the \gls{dpp} is to maximise the preferences of the users (represented by their desired jobs), hence representing a complementary use case with respect to \gls{rcpsp}. 

In these scenarios, an optimisation algorithm would compute and recommend to a user a project scheduling or a bachelor's degree plan. Besides computing a solution to the optimisation problem at hand, there is the challenge of \emph{explaining} the solution to an end user. Recently, this challenge has also been identified as a key issue by AI researchers. Thus, the need to explain the decisions of a DSS has recently received much attention in the AI community, particularly because of the widespread use of powerful ``\textit{black-box}'' \gls{ml} models \cite{adadi2018peeking}. Such interest has been amplified by regulations that establish individuals’ right to receive meaningful justifications for algorithmic decisions \cite{URL-EU-GDPR}. However, despite its actual-world relevance, optimisation-based DSS has received much less attention.

In this context, this paper focuses on harnessing AI techniques to generate explanations for optimisation-based \gls{dss}. More precisely, our research question is how to leverage the capabilities of \glspl{llm} to 
answer the questions that users might pose to the \gls{dss} to \emph{challenge} its recommended decisions (e.g., ``Why do you recommend taking a course on \emph{Cybersecurity} instead of a course on \emph{Computational Finance}?''). These types of questions reflect a human tendency to seek \emph{contrastive} explanations ---those that do not merely state why something was recommended, but why it was recommended rather than an alternative (e.g.,``\textit{Computational Finance is not recommended because it is an elective course whose skills are not relevant to your desired job, whereas Cybersecurity contributes to acquiring a high level of Vulnerability Research, a skill required for your desired job.}''). 
The usefulness of contrastive explanations is supported by the social sciences, which argue that explanations are most useful and satisfying when they are contrastive \cite{miller2019explanation}. Notice that aiming at contrastive explanations aligns with state-of-the-art works in \gls{xai}, which pursue generating contrastive explanations to justify the outcomes of \gls{ml} algorithms \cite{stepin2021survey,miller2021contrastive,ignatiev2021contrastive}.

Despite their remarkable capabilities in natural language processing, \glspl{llm} are known to struggle with abstract reasoning tasks~\cite{gendron2023large}. This limitation becomes particularly problematic when attempting to explain the outcome of optimisation problems, whose explanations typically depend on intricate reasoning across many interdependent constraints. As a result, directly prompting an \gls{llm} to explain an outcome (as done, for instance, by Rajani \etal~\cite{rajani2019explain}) may lead to unreliable explanations~\cite{lin2024towards,rehman2024hallucination}. In particular, \glspl{llm} may produce \emph{incorrect} explanations, for example, by asserting that a user-proposed alternative is infeasible when it is not (as discussed in Section \ref{subsec:text}). Moreover, their explanations are often \emph{uninformative}, as they tend to offer vague statements about the existence of constraints without explicitly identifying which constraints prevent the user-proposed alternative outcome. 

\begin{figure*}[t]
        \centering
    \includegraphics[width=0.9\textwidth]{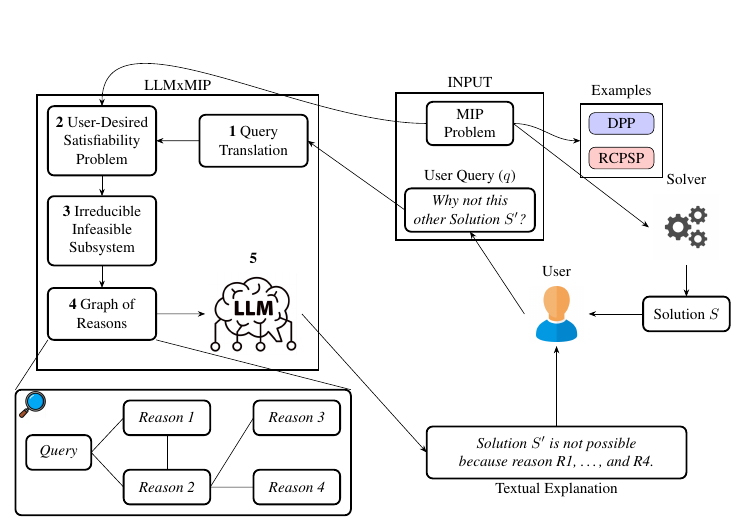}
    \caption{Overview of our explanatory approach. Section \ref{sec:expl} discusses its steps in detail.}
\label{fig:overview}
\end{figure*}

To overcome this lack of abstract reasoning capabilities of \glspl{llm}, in this paper, we propose \gls{exmip}, a novel methodology that places constraint reasoning at the service of \glspl{llm} to generate textual explanations that are \emph{correct} and 
\emph{informative}.
More specifically, \gls{exmip} is a process that follows five main steps, outlined in Figure \ref{fig:overview}.
First, it transforms a user's query, i.e., the question posed by the user, into a constraint and reformulates the original optimisation problem as a \gls{csp} that is compliant with the user’s query (first and second steps, respectively). 
Then it identifies conflicts that prevent us from obtaining a better solution that accommodates the user query. This amounts to extracting the set of constraints in the \gls{csp} that make it infeasible (third step).\footnote{The infeasibility of the \gls{csp} proves that the query of the user leads to a solution that is worse than the computed optimal one.} Our approach exploits constraint reasoning to compute such a set of constraints, i.e., the minimal subset of constraints that make the \gls{csp} infeasible, namely the \gls{iis} \cite{chinneck2007feasibility}.
Afterwards, \gls{exmip} transforms the unstructured collection of mathematical constraints in the \gls{iis} into a structured graph, the so-called \emph{graph of reasons}. 
Such a graph captures the relationships between constraints and assigns them a semantic interpretation --- which we call ``\emph{reason}''--- by means of \gls{nl} templates. 
Finally, an \gls{llm} leverages the graph of reasons to generate a textual explanation. 

Overall, we significantly extend prior work \cite{leraleri2025exploiting} on explainability for \acrshort{milp}s, which relies solely on constraint reasoning, by introducing the following novel contributions:

\begin{itemize}
    \item We propose \gls{exmip}, a novel methodology that integrates state-of-the-art \gls{ai} techniques ---combining constraint reasoning with \glspl{llm}--- to explain the outcomes of an optimisation-based process encoded as a \gls{mip}. 
    \item We employ constraint reasoning techniques to identify the underlying reasons that explain why it is not possible to obtain a better solution that satisfies the user’s query. 
    Then we transform this unstructured set of reasons into a graph that captures their relationships. 
    \item In contrast to \cite{leraleri2025exploiting}, 
    we leverage \glspl{llm} to generate textual explanations from the graph of reasons, whose structured representation enables \glspl{llm} to produce complete and accurate explanations.
    \item We test \gls{exmip} on two prominent real-world optimisation-based scenarios, i.e., the \gls{dpp} (considering the data from a real Bachelor's Degree \cite{lera2025computing}) 
    and the \gls{rcpsp}. 
    Our results indicate that using graphs of reasons as input to \glspl{llm} results in textual explanations that are significantly better in terms of completeness and readability. Additionally, we show that computing explanations with \gls{exmip} is faster than solving such problems in most cases. 
\end{itemize}

This paper is organised as follows. Section \ref{sec:back} discusses background concepts of integer programming and constraint programming, and the basic notions of the scheduling problems used in the experimental evaluation. In Section \ref{sec:expl}, we propose our approach.
In Section \ref{sec:exp}, we evaluate the quality of the explanations generated with \glspl{llm}. Additionally, we perform an empirical analysis of the computational cost of providing explanations. We discuss the related work in Section \ref{sec:related}. Section \ref{sec:conclusion} concludes the paper and discusses future research directions.

\section{Background}
\label{sec:back}

\subsection{Mathematical Optimisation}\label{subsec:mip}

As previously mentioned, this paper proposes a general methodology to explain the recommendations of optimisation-based \gls{dss}, i.e., systems whose recommendations or decisions are computed by solving an optimisation problem. 
To this end, we first introduce basic concepts of mathematical optimisation. 

Mathematical optimisation focuses on finding the best possible solution to a problem, possibly subject to a set of constraints. A solution corresponds to a numerical assignment to a set of decision variables, where these variables encode concrete real-world decisions. An optimisation problem therefore seeks an assignment that (i) minimises or maximises a given objective function and (ii) satisfies all imposed constraints. 
For example, solving a scheduling problem usually amounts to finding the schedule of tasks that respects the particular schedule restrictions and minimises the total activity duration. 

A well-established technique to formalise optimisation problems is \acrfull{mip}. \gls{mip} problems are those that have discrete and continuous decision variables \cite{wolsey2020integer}. In simple terms, the outcome of \gls{mip} problems includes some variables that take fractions or real values and others that are restricted to integers. 
\gls{mip} represents a general class of optimisation problems. Indeed, \gls{ip} is a special case of \gls{mip} in which all variables are restricted to integer values \cite{wolsey1999integer}. Conversely, continuous optimisation is another special case of \gls{mip} in which all variables are continuous.
In general, \gls{mip} problems (with integer variables) are NP-Hard \cite{wolsey2020integer} and are expensive to solve.

In more detail, an \gls{mip} is an optimisation problem in the form of
\begin{subequations}\label{prob:mip:intro}
\begin{align}
    \mbox{minimise} \quad & f_0(\textbf{x},\textbf{y}) \label{prob:mip:o:intro} \\
    \mbox{subject to} \quad & f_i(\textbf{x},\textbf{y})\leq 0 , \quad \forall i = 1,\dots, m, \label{prob:mip:c1:intro}    
\end{align}
\end{subequations}
where $\textbf{x}\in \mathbb{R}^{n}$ is a vector of continuous decision variables, $\textbf{y}\in \mathbb{Z}^{k}$ is a vector of integer decision variables, and $f_i:\mathbb{R}^{n} \times \mathbb{Z}^{k} \xrightarrow{} \mathbb{R}$ is any real-valued function. Notice that we do not assume any form for functions $f_i$. More specifically, the family of MIPs where the objective function $f_0$ is linear or quadratic is characterised as \gls{milp} and \gls{miqp} respectively. 


\begin{figure}
    \centering
    \includegraphics[width=0.7\columnwidth]{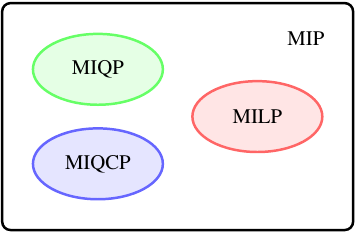}
    \caption{Classes of MIP problems. While MILPs consider linear objective and constraints, MIQP considers a quadratic objective function, and MIQCP comprises problems with quadratic constraints. }
    \label{fig:mip_classes:intro}
\end{figure}


\begin{remark}
    Since we aim to explain the outcome of a given optimisation problem, we assume that such a solution can be computed in the first place by means of an optimisation solver within a reasonable amount of time. Along these lines, with a small abuse of terminology, in the remainder of this paper, we will refer to the families of optimisation problems that a given solver can solve as ''\emph{optimisation problems}''. More specifically, in this paper, we employ CPLEX \cite{cplex2009v12}, which can solve the families of optimisation problems shown in Figure \ref{fig:mip_classes:intro}, i.e., \gls{milp}, \gls{miqp}, and \gls{miqcp}. We adopt this approach since the explanation technique proposed in this work is solver-agnostic, i.e., it can be applied to any optimisation problem, provided that a solver capable of computing a solution exists.\footnote{For instance, our technique can be directly extended to \gls{micp} (i.e., a family that comprises and extends the three subclasses shown in Figure \ref{fig:mip_classes:intro}) by switching to the MOSEK solver \cite{aps2019mosek}, which is capable of solving such a family of optimisation problems.}
\end{remark}

In Section \ref{subsec:mip_prob}, we discuss the basic notions of the \gls{mip} problems we use to apply our explainability approach: (i) the \gls{rcpsp}, a well-known problem in the optimisation literature \cite{kolisch1997psplib}; and (ii) the \gls{dpp}, a problem we recently proposed \cite{lera2025computing}.

\subsection{Optimisation Problems Domains}\label{subsec:mip_prob}

\subsubsection{The Resource-Constrained Project Scheduling Problem}\label{subsec:mip_prob:rcpsp}

Here, we report the basic definitions of the \gls{rcpsp} formalisation. Intuitively, the \gls{rcpsp} is an NP-hard problem \cite{blazewicz1983} that consists of scheduling activities to meet precedence and resource constraints while minimising the project's makespan \cite{hartmann2010,hartmann2022}. 


We base this formalisation on the single mode variant of the \gls{rcpsp} problem in \cite{kolisch1997psplib}. We consider a project with $J$ activities to complete. W.o.l.o.g. activity 1 is the only start activity, and activity $J$ is the only finish activity. Each activity $j$ has a given processing time $d_j$, and a set of $R_j$ resources that $j$ needs to complete.
Moreover, each activity $j$ has a set of activity predecessors $\mathcal{P}_j$, which implies that activity $j$ can only start when each of its predecessors $h\in \mathcal{P}_j$ has been completed. The planning horizon is divided into $T$ time units labelled as $t = 1, \dots, T$, where $T$ is a makespan upper bound. Given the precedence relations and the activity duration, we can calculate the time windows, i.e., intervals $[EF_j,LF_j]$, with the earliest and latest finishing times of each activity. 
In addition, we consider that at each time $t$, there are $K_r$ units of resource $r\in R$ available. A given activity $j$ requires an amount $k_{jr}$ of resource $r$ per period. Given those elements, the goal is to find a feasible schedule that minimises the total duration, i.e., minimises the finishing time of activity $J$.

We present the formalisation of the \gls{rcpsp} as follows.
\begin{subequations}\label{prob:RCPSP}
\begin{align}
    & \mbox{minimise} \quad \Phi(x) = \sum_{t=EF_{J}}^{LF_{J}} t\cdot x_{Jt} \label{prob:RCPSP:obj}\\
    & \mbox{s.t.} \sum_{t=EF_{j}}^{LF_{j}} x_{jt} = 1, \, j=1,\dots,J, \label{prob:RCPSP:c1}\\
    & \sum_{t=EF_{h}}^{LF_{h}} t\cdot x_{ht} \leq \sum_{t=EF_{j}}^{LF_{j}} (t-d_{j})\cdot x_{jt}, \, j=2,\dots,J,\, h \in \mathcal{P}_j, \label{prob:RCPSP:c2}\\
    & \sum_{j=1}^{J} k_{jr} \sum_{q=\text{max}(t,EF_{j})}^{\text{min}(t+d_{j}-1,LF_{j})} x_{jq} \leq K_r, \, r\in R, \, t=1,\dots,\Bar{T}, \label{prob:RCPSP:c3} \\
    & x_{jt} \in \{0,1\}, \, j=1,\dots,J, \quad t = EF_{j},\dots, LF_{j}\label{prob:RCPSP:c4}
\end{align}
\end{subequations}
where $x_{jt}$ are the decision variables expressing whether an activity $j$ is completed on time $t$. The goal of \gls{rcpsp} \eqref{prob:RCPSP:obj} is to minimise the finishing time of the last activity $J$. Constraint \eqref{prob:RCPSP:c1} requires each activity to be scheduled exactly once. Constraint \eqref{prob:RCPSP:c2} ensures the precedence relations, i.e., every activity must start after its predecessors have been completed. Constraint \eqref{prob:RCPSP:c3} limits the use of renewable resources.

\subsubsection{The Degree Planning Problem}\label{subsec:mip_prob:dpp}

For completeness, we report the basic elements of the \gls{dpp} formalisation, which we introduced in detail in \cite{lera2025computing}. 
In essence, solving the \gls{dpp} results in computing a feasible course schedule across semesters that enables a student to maximise the acquisition of skills relevant to a target job, while satisfying the typical academic regulations in Bachelor's degree syllabi.


We begin by defining a sequence of semesters $\mathcal{L} = \langle l_1,\ldots,l_n \rangle$.
Next, let $\mathcal{S} = {s_1,\ldots,s_w}$ denote the set of skills.
Finally, we refer to a \emph{unit} as a university subject, which serves as the basic component for building a degree plan. 

\begin{definition}[Unit]
A unit $u$ is represented as a tuple $\langle S, \sigma, c, \gamma \rangle$, where  
$S \subseteq \mathcal{S}$ denotes the subset of skills a student acquires by taking the unit,  
$\sigma : S \to [1 .. {lev}_{max}]$ is a mapping that assigns each skill its achieved level (${lev}_{max}$ being the highest attainable level),  
$c \in \mathbb{N}$ specifies the number of credits granted upon successful completion,  
and $\gamma : \mathcal{L} \to \{\top,\bot\}$ is a Boolean mapping indicating whether the unit is available in semester $l$ (i.e., $\gamma(l) = \top$) or not (i.e., $\gamma(l) = \bot$).\footnote{Since universities operate with limited resources, we take into consideration that not every unit is offered in each semester.}
\end{definition}
We denote by $\mathcal{U}$ the set of all units. 

Degree syllabi often include prerequisites for certain subjects, requiring students to have prior foundational knowledge before enrolling. 
Formally, we capture these precedence relationships as follows.
\begin{definition}[Precedence Relationship]
\label{def:precedence}
A precedence relationship is a tuple $p = \langle u_p, U_p \rangle$, where $U_p \subset \mathcal{U}$ is a set of units such that at least one of its units has to be completed before unit $u_p$. 
\end{definition}
We denote the set of all precedence relationships as $P$.
Additionally, we refer to \emph{core units} $\mathcal{U}^{c} \subset \mathcal{U}$ as those units that are compulsory to plan. 
Moreover, some bachelor's degree syllabi require students to complete at least one \emph{major}. A \emph{major} constitutes a specialisation within a bachelor's degree syllabus, which we now formally define:
\begin{definition}[Major]\label{def:major}
A major comprises a set of units $M \subset \mathcal{U}$ that must comply with the following restrictions: (i) a subset of core units $M^{c} \subseteq M$ must be completed; and  
(ii) a subset of eligible units $M^{e} \subset M$ is partitioned into smaller groups $SM^{e} \subseteq M^{e}$, from which a student is required to select a specified number of units $n_{SM^{e}}$ from each group. 
\end{definition}
Similarly to the set of units, $\mathcal{M}$ represent the set of majors. 
In addition, we define the concept of a \emph{credits specification} as a vector $d \in \mathbb{N}^{|\mathcal{L}|}$, indicating the number of credits students must acquire in each semester. 

Lastly, we introduce the concept of \emph{job}. Degree plans will be related to a job, since the skills gained from completing a degree plan will give access to different jobs.
\begin{definition}[Job]
A job $J$ is a tuple $\langle S,\sigma' \rangle$, where $S \subseteq \mathcal{S}$ denotes the skill set necessary for the job, and $\sigma': S \rightarrow [1 .. {lev}_{max}]$ is a function specifying the required skill level to access the job. 
\end{definition}
Having defined the concepts relevant to degree planning, we formalise the \gls{dpp} as follows.
\begin{definition}[Degree planning problem]\label{def:dpp}
Given a degree specification $\langle \mathcal{L},\mathcal{S},\mathcal{U},P,d,J \rangle$, the Degree Planning Problem is that of finding the plan for the degree that best fits the required skills' levels of the desired job $J$. Formally, if we note as $X$ the set of decision variables, then the problem is:
\begin{equation}\label{eq:degreeplan}
    x^* = \argmin_{x} distance(\sigma(S_{X}), \sigma(S_J)),
\end{equation}
where the goal is to minimise the distance between the required skills' level for the job $\sigma(S_J)$ and the skills' level acquired by completing the degree plan $\sigma(S_{U^*})$.
\end{definition}


Having introduced the basic definitions of the \gls{dpp}, we present its encoding provided by Lera-Leri et al.~\cite{lera2025computing}. As above-mentioned, the goal of the \gls{dpp} is to find the degree plan, i.e., the \emph{assignments of units to semesters}, such that it minimises the difference between the level of the required skills by the desired job and the acquired skills the student might have after completing the degree plan. Hence, given a \gls{dpp}, we formulate it as follows. 
\begin{subequations}\label{eq:dpp_mip}
    \begin{align}
        &\text{minimise} \enspace  distance(\sigma(S_{X}), \sigma(S_J)), \label{eq:obj_dpp} \\
    &\text{subject to} \enspace  \sum_{l\in\mathcal{L}}x_{u,l} \leq 1, \quad  \forall \; u \in \mathcal{U},  \label{eq:con1_dpp} \\
    & \sum_{u\in\mathcal{U}}c_u\cdot x_{u,l} = d_l, \quad \forall \; l \in \mathcal{L}, \label{eq:con2_dpp} \\
    & \sum_{l \in \mathcal{L} \; | \; \gamma_u(l) = \bot} x_{u,l} = 0, \quad\forall \; u \in \mathcal{U},  \label{eq:con3_dpp} \\
    & \sum_{l\in\mathcal{L}}x_{u,l} = 1, \quad \forall \; u \in \mathcal{U}^{c},  \label{eq:con4_dpp} \\
    & \sum_{l=l_1}^{l_{i-1}}\sum_{u\in U_p}x_{u,l} \geq x_{u_p,l_i},\quad  \forall \; p \in P, \; i=2,\dots,n, \label{eq:con5_dpp} \\
    & \sum_{M\in \mathcal{M}} v_{M} \geq 1, \label{eq:con6_dpp} \\
    & \sum_{l\in \mathcal{L}} x_{u,l} \geq v_{M},\quad  \forall \; u \in M^{c}, \; M \in \mathcal{M}, \label{eq:con7_dpp} \\
    & \sum_{u \in SM^{e}}\sum_{l\in \mathcal{L}} x_{u,l} \geq n_{SM^{e}}\cdot v_{M},  \forall \, SM^{e} \in M^{e}, \, M \in \mathcal{M}. \label{eq:con8_dpp} 
    \end{align}
\end{subequations}
We denote $x\in \{0,1\}^{|\mathcal{U}|\times |\mathcal{L}|}$ as the set of decision variables, where $x_{u,l}$ indicates whether unit $u$ is planned in semester $l$. The objective function \eqref{eq:obj_dpp} is a convex function that evaluates the distance between the required skills' level of the desired job and the skills' level acquired.\footnote{For the sake of simplicity, we provide the encoding details of the objective function in the appendix. } 
Constraints \eqref{eq:con1_dpp} ensure that each unit is not planned more than once. Then, constraints \eqref{eq:con2_dpp} ensure that units fulfil the required number of credits in each semester. In addition, constraints \eqref{eq:con3_dpp} veto plan units in semesters where they are not offered. Constraints \eqref{eq:con4_dpp} enforce the plan of core units. Furthermore, constraints \eqref{eq:con5_dpp} ensure that all precedence relationships are satisfied. 
Finally, constraints \eqref{eq:con6_dpp}, \eqref{eq:con7_dpp}, and \eqref{eq:con8_dpp} guarantee that the computed degree plan fulfils the major requirements. In detail, \eqref{eq:con6_dpp} requires completing at least one major. For that, we use auxiliary variables $v\in\{0,1\}^{|\mathcal{M}|}$, where $v_M$ indicates whether major $M$ is completed. Moreover, \eqref{eq:con7_dpp} ensures that all major core units are planned in case of pursuing major $M$. Lastly, \eqref{eq:con8_dpp} requires planning a given number of eligible units to complete major $M$. 

By formalising the \gls{dpp} as a \gls{mip}, we enable the use of state-of-the-art solvers such as CPLEX to solve the problem.
Notice that, as stated in \cite{lera2025computing}, the \gls{dpp} cannot be represented as a \gls{rcpsp}, since the optimisation goal they pursue is different. Therefore, considering these two optimisation problems provides two complementary perspectives for our evaluation.


Next, we introduce techniques for extracting the minimal set of infeasible constraints (i.e., the \gls{iis}), a fundamental step within \gls{exmip}.

\subsection{Irreducible Infeasible Subsystem}\label{subsec:iis}

The \acrfull{iis} is a minimal set of inconsistent constraints. 
However, to properly define the notion of \gls{iis}, we must first define the notions of constraint and system of constraints. We borrow the definition of constraint from \cite{meseguer2006soft}:
\begin{definition}
    Given a sequence of distinct variables $V=\langle x_1,\dots,$ $x_k\rangle$ and their associated finite domains $\mathcal{D}_1,\dots,\mathcal{D}_k$, a constraint $c$ is a subset of $\mathcal{D}_1\times\dots\times\mathcal{D}_k$. Thus, a constraint $c$ specifies the assignments allowed by the variables of $V$.
\end{definition}
In other words, a constraint is a condition that variables must satisfy. 
Then, a constraint system is a set of constraints $C$. 
Hence, we define an infeasible system as follows.
\begin{definition}
    An Infeasible System of constraints is a set of constraints $C$ such that no assignment of $V$ exists that satisfies all the constraints in $C$.
\end{definition}
Having defined an infeasible system of constraints, we define an \gls{iis} borrowing the definition from \cite{chinneck2007feasibility}:

\begin{definition}\label{def:iis}
    An \gls{iis} is a subset of constraints within an infeasible system that is in itself infeasible, but any proper subset of the \gls{iis} is feasible. 
\end{definition}

Finding an \gls{iis} is crucial in \gls{csp} 
because it identifies the constraints preventing us from obtaining a feasible solution. Thus, many algorithms have been developed to find the \gls{iis} for \gls{lp} \cite{van1981irreducibly,gleeson1990identifying,parker1996finding,chinneck1997finding}, \gls{milp} \cite{guieu1999analyzing} or Non-Linear Integer Programming \cite{chinneck1995analyzing}. Finding the \gls{iis} involves iteratively analysing the infeasible system and identifying the minimal subset of constraints that cause the infeasibility. Some examples of algorithms that find \glspl{iis} are: the Deletion Filter \cite{chinneck1991locating}, which iteratively deletes constraints and checks the feasibility of the system; the Additive Method \cite{tamiz1996detecting}, which iteratively adds constraints to an initially empty set and tests whether the set is feasible; or the combination of both methods \cite{guieu1999analyzing}. 
Although computing \glspl{iis} can be costly, state-of-the-art solvers efficiently compute them.

\section{Building Explanations for Optimisation-Based Decisions}\label{sec:expl}

This section presents \gls{exmip}, our methodology for generating contrastive explanations for users interacting with optimisation-based \gls{dss}. As illustrated in Figure~\ref{fig:overview}, our approach combines constraint reasoning techniques with \glspl{llm} to produce textual explanations. Specifically, the figure outlines the key components of \gls{exmip}, which we explain in detail in the remainder of this section. While the methodology is general ---applicable to any application domain involving a \gls{mip} problem--- we use the \gls{dpp}, formalised in Section \ref{subsec:mip_prob:dpp}, as an illustrative example throughout this section.

In a \gls{mip} problem, constraints encode restrictions that the solution must satisfy, while the objective function encodes some preferences or quantities to be optimised \cite{boyd2004convex}. Thus, a solver output is affected by the constraints and the objective function. 
In the case of the \gls{dpp}, the solver output is the degree plan, which is affected by the syllabi restrictions and the skills required for the desired job. 
Hence, we propose to exploit the information provided by the constraints and the objective function to build contrastive explanations. 
The following running example illustrates the concepts of our approach.

\begin{example}\label{example:knapsack}
    Consider that we aim to compute a degree plan for a real-world Bachelor's degree in Information and Communication Technology for a student pursuing a career in Software Engineering.\footnote{Section \ref{subsec:prob_instances} provides details on the problem data.} Such a Bachelor's Degree lasts six semesters (the first semester starts in autumn), and students must complete 40 credits each semester (completing a subject earns 10 credits). 
    As mentioned in Section \ref{subsec:mip_prob:dpp}, solving the \gls{dpp} amounts to maximising the required skills' level of the desired job, 
    while satisfying a set of constraints $C$. 
    For the sake of clarity, we only show a part of the optimal degree plan $\textbf{x}^{*}$ in Table \ref{tab:solution_dpp}.
\end{example}

\begin{table*}[t]
    \caption{Information of the key subjects of the optimal degree plan of Example \ref{example:knapsack}. We provide the semesters in which the subject is planned (and the season), the subject's seasonal offer, its predecessors, and whether the subject is a core subject. If a subject is not planned, we indicate it with a $-$ sign in the semester column. }
    \label{tab:solution_dpp}
    \setlength{\tabcolsep}{5pt}
    \centering
    \begin{tabularx}{0.9\textwidth}{p{0.5\textwidth}cccc}
        \toprule
        Subject & Semester & Offer & Predecessors & Core \\
        \midrule
        Programming Fundamentals (PF) & 2 (Spring) & Both & - & Yes \\
        Programming Techniques (PT) & 4 (Spring) & Spring & PF & Yes \\
        Object Oriented Programming (OOP) & $-$ & Spring & PF & No \\
        Operating Systems Programming  (OSP) & 5 (Autumn) & Autumn & PT or OOP & Yes \\
        \bottomrule
    \end{tabularx}
\end{table*}
\normalsize  

\begin{table*}[t]
    \setlength{\tabcolsep}{3pt}
    \caption{Set of possible user questions, query constraints and constraint encodings for the \gls{dpp} formalisation.}
    \label{tab:queries_dpp}
    
    \begin{tabularx}{\textwidth}{p{0.025\textwidth}p{0.37\textwidth}p{0.33\textwidth}Y}
        \toprule
        & \textbf{Question} & \textbf{Query constraints} & \textbf{Constraint encoding ($C_Q$)} \\
        \midrule
        Q1.& Why is subject $u$ planned at semester $l$? & \textbf{veto} to plan $u$ at semester $l$ & ${x}_{u,l} = 0$ \\
        Q2.& Why is subject $u$ not planned at semester $l$? & \textbf{enforce} to plan $u$ at semester $l$ & ${x}_{u,l} = 1$ \\
        Q3.& Why is subject $u$ \textbf{not} planned before semester $l$? & \textbf{enforce} to plan $u$ before semester $l$ & $\sum_{i=0}^{l-1} {x}_{u,i} = 1$ \\
        Q4.& Why is subject $u$ \textbf{not} planned after semester $l$? & \textbf{enforce} to plan $u$ after $l$ & $\sum_{i=l+1}^{l_n}{x}_{u,i} = 1$ \\
        Q5.& Why is subject $u$ planned? & \textbf{veto} $u$ & $ \sum_{l\in\mathcal{L}}{x}_{u,l} = 0$ \\
        Q6.& Why is subject $u$ not planned? & \textbf{enforce} $u$ & $\sum_{l\in\mathcal{L}} {x}_{u,l} = 1$ \\
        Q7.& Why is the group of subjects $U$ planned at semester $l$? & \textbf{veto} to plan all $u\in U$ at semester $l$ & $\sum_{u\in U}{x}_{u,l} < |U|$ \\
        Q8.& Why is the group of subjects $U$ \textbf{not} planned at $l$? & \textbf{enforce} to plan all $u\in U$ at semester $l$ & $\sum_{u\in U}{x}_{u,l} = |U|$ \\
        Q9.& Why is subject $u$ planned at semester $l$ instead of $u'$? & \textbf{enforce} to plan $u'$ at $l$ and \textbf{veto} $u$ at $l$ & ${x}_{u',l} = 1$, ${x}_{u,l} = 0$ \\
        Q10.& Why is subject $u$ planned instead of subject $u'$? & \textbf{enforce} $u'$ and \textbf{veto} $u$ & $\sum_{l\in\mathcal{L}}{x}_{u',l} = 1$, $\sum_{l\in\mathcal{L}}{x}_{u,l} = 0$ \\
        \bottomrule
    \end{tabularx}
\end{table*}
\normalsize

Some approaches in the literature build a contrastive explanation by comparing an optimal solution with a user-desired solution \cite{georgara2022building,zehtabi2024contrastive}. The latter results from translating a user's ``\emph{query}'', i.e., a question for which a user expects an explanation, into \emph{additional constraints} that the decision (optimal solution) should satisfy. 
While our methodology does not directly compare the two solutions as in previous work, we similarly encode a user's query as additional constraints on the optimisation problem--- a process we describe in detail in the next section.

\subsection{Query Translation}\label{sec:query}

Here, we characterise the types of queries that a user can pose and how to encode them into query constraints. 
More precisely, a user asks ``\emph{why}'' questions (in natural language) that challenge a decision or solution computed by an optimisation-based \gls{dss}. 
For example, regarding the \gls{dpp}, a user might ask ``Why is subject Programming Fundamentals not planned at semester $4$?''. 
Hence, a contrastive explanation should tell the user why an alternative solution complying with the query, the so-called user-desired solution, is not at least as good as the optimal solution. In the previous example, this alternative solution would be a degree plan in which Programming Fundamentals is scheduled for semester 4, as suggested by the user's query.
Henceforth, since a user's query challenges a recommendation computed by an optimisation-based \gls{dss}, we assume that in the case of a \gls{mip} problem, a user's query refers to value assignments that do occur in an optimal solution \cite{zevcevic2021causal}. 
In other words, a user's query is related to the decision variables of the \gls{mip} at hand.
In optimisation problems, decision variables are related to objects, activities, agents, etc. 
For instance, in the \gls{dpp}, the decision variables indicate whether a subject is planned for a given semester. Hence, there is a decision variable for every pair of subjects and semesters. 

\begin{example}\label{example:KP2}
    Following Example \ref{example:knapsack}, a user may ask ``\emph{Why is \emph{Programming Fundamentals} not planned at semester 4?}'' about the solution in Table \ref{tab:solution_dpp}. Building an explanation to this query requires enforcing to plan Programming Fundamentals at semester 4, which is encoded by constraint   \begin{equation}\label{eq:rcpsp:query:example}
        x_{\text{PF},4} = 1,
    \end{equation} 
    (see Table \ref{tab:queries_dpp}). 
    Enforcing this constraint would allow obtaining a user-desired solution. 
\end{example}

To formally encode a user’s query as additional constraints in the \gls{mip} problem, we adopt the classification proposed by Georgara et al.~\cite{georgara2022building}.
\begin{itemize}
\item \textbf{enforce} constraints to express that some value assignments that do not hold in an optimal solution are required to hold in the user-desired solution; and 
\item \textbf{veto} constraints to express that some value assignments that do hold in an optimal solution must not hold in the user-desired solution.
\end{itemize}

Following such classification, Table \ref{tab:queries_dpp} lists possible query templates for the \gls{dpp}. The table also shows the translations of queries into (linear) query constraints. 
While the table focuses on the \gls{dpp}, the process of (i) identifying queries, (ii) defining query templates, and (iii) translating them into query constraints is general and can be applied to other \gls{mip} problems. Indeed, we report the lists of possible query templates for the \gls{rcpsp} in Table \ref{tab:queries_rcpsp} of Appendix \ref{subsec:query_translation}. 

To illustrate how user queries are systematically translated into formal query constraints within our approach, we now discuss concrete examples based on the \gls{dpp} (see Table~\ref{tab:queries_dpp}). 
\begin{itemize}
    \item Query Q1 ``\emph{Why is subject $u$ planned at semester $l$?}'' translates into \textbf{vetoing} the planning of subject $u$ at semester $l$ (i.e., $x_{u,l}=0$) to enforce an alternative solution to the optimal solution. Notice that this query refers to a \textbf{single assignment} occurring in the optimal solution. 
    \item To comply with query Q7 ``\emph{Why is the group of subjects $U$ planned at semester $l$?}'' we must \textbf{veto} the planning of the entire set of subjects $U$ at semester $l$ (i.e., $\sum_{u \in U} x_{u,l} < |U|$). In this case, the query refers to the planning of multiple subjects, i.e., \textbf{multiple assignments}, in the optimal solution. 
    \item When addressing query Q2 ``\emph{Why is subject $u$ \textbf{not} planned to be completed at semester $l$?}'' we must \textbf{enforce} the hypothetical planning of unit $u$, which the current (optimal) degree plan disregards. 
    \item To answer the query Q8 ``\emph{Why is subject $u$ planned to be completed at semester $l$ instead of subject $u'$?}'', we must: (1) enforce subject $u'$; and (2) veto $u$ at semester $u$. Thus, we consider combinations of \textbf{enforce} and \textbf{veto} constraints to answer more specific queries about value assignments that did and did not occur. 
\end{itemize}

\subsection{The User-Desired Satisfiability Problem}\label{subsec:asp}

Since numerical comparisons are not desirable in explanations \cite{miller2019explanation}, we propose instead to identify the reasons why a user query, which challenges the optimal solution by posing a user-desired scenario, cannot yield a better solution than the optimal one. 
To do so, we cast the problem of obtaining the user-desired solution as a \gls{csp}. Such a \gls{csp} contains the problem constraints, the query constraints, and an additional constraint enforcing a solution at least as good as the optimal solution to the ``\emph{main}'' problem. 
This formulation will serve to extract the constraints that restrict us from obtaining a user-desired solution with optimal value. 

\begin{definition}[User-Desired Satisfiability Problem]\label{def:asp}
    Let $M$ be a \gls{mip}, a set of query constraints $C_Q$ associated with a query, and $f^{*}=f(\textbf{x}^{*})$ the value of the optimal solution $\textbf{x}^{*}$ for $M$. The \gls{asp} for $M$ is a \gls{csp} composed of the following constraints: 
\begin{equation}\label{eq:asp}
    (i)\; f(\textbf{x}) \leq f^{*}, \quad (ii)\; C, \quad (iii)\; C_Q,
\end{equation}
where $\textbf{x}$ are variables and (i) is a minimality constraint.\footnote{We consider $M$ a minimisation problem. In maximisation problems, we impose a maximality constraint such that $f_M(\textbf{x})  \geq f^{*}$.}
\end{definition}
\noindent

\begin{example}\label{example:ASP}
We define a \gls{asp} for example \ref{example:KP2} as the following \gls{csp} with variables $\textbf{x}$:    
\begin{equation}\label{eq:asp:example}
    (i)\; f(\textbf{x}) \leq f^{*}, \quad (ii)\; C, \quad (iii)\; x_{\text{PF},4} = 1,
\end{equation}
    where $(i)$ is the minimality constraint, $(ii)$ are the main problem constraints, and $(iii)$ is the query constraint.
\end{example}

Based on a user's query, the literature typically considers user-desired optimisation scenarios that accommodate such a query, 
aiming at quantifying how worse a user-desired solution would be in terms of optimality. 
In this context, the literature typically defines an extended optimisation problem, adding the query constraints to the constraint set \cite{zehtabi2024contrastive,georgara2022building}. Then it solves the extended problem and compares its solution with the optimal one. 
Notice that, in this paper, we neither solve an extended \gls{mip} problem that accommodates a user's query (as in \cite{zehtabi2024contrastive}) nor quantify how worse the user-desired solution is 
(as in \cite{georgara2022building}). 
In contrast, we identify the reasons why the user's query leads to worse solutions. In the next section, we discuss how to compute such reasons from the constraints of the \gls{asp}.

\subsection{Computing the Infeasible Constraints}\label{subsec:comp_iis}

Having defined the \gls{asp}, we extract the minimal set of constraints that prevent us from finding a feasible solution to the \gls{asp}, i.e., we extract the \gls{iis}.
Notice that there might be multiple \glspl{iis} corresponding to the same set of constraints since the number of \glspl{iis} increases exponentially with the size of a constraint set \cite{chakravarti1994some}. 
More precisely, different \glspl{iis} (despite being all ``irreducible'') could contain different numbers of constraints (i.e., some \glspl{iis} might be smaller than others). 
Hence, this would subsequently lead us to explanations of varying sizes, since we build our explanations based on the constraints of an \gls{iis}.  
Following \cite{rosenfeld2021better},\footnote{The number of ``rules'' (i.e., constraints in our case) in an explanation is a commonly used metric to evaluate its simplicity.} 
we should aim to compute the smallest possible \gls{iis} because the more concise an explanation, the easier it is to understand.

On the one hand, some algorithms in the literature can compute the ``smallest'' \gls{iis} \cite{liffiton2009branch,ignatiev2015smallest}. However, computing the ``smallest'' \gls{iis} is very costly.
On the other hand, off-the-shelf solvers such as CPLEX use specialised heuristics to efficiently compute a single \gls{iis} that is not guaranteed to be the smallest one. 
However, in practice, it is observed that CPLEX computes the smallest \gls{iis} for most instances with a significantly lower computational cost \cite{leraleri2025exploiting}.

Having computed the IIS, the resulting IIS might give us information about the \gls{asp} instance at hand. If an \gls{asp} is \textbf{feasible}, i.e., at least one solution exists, the alternative proposed by the user is as good as the optimal solution of the main problem. In such a case, there exists no set of inconsistent constraints, i.e., $IIS(C_{\gls{asp}}) = \emptyset$, where $C_{\gls{asp}}$ is the set of constraints of an \gls{asp}.\footnote{We denote $IIS(C)$ the function computing an \gls{iis} of constraint set $C$.} 
If the \gls{asp} is infeasible, i.e., no solution exists, we differentiate two scenarios: 

\begin{itemize}
     \item \textbf{Query Infeasibility}. In this case, the query constraints are incompatible with the main problem restrictions. As a result, some query constraints conflict with the problem constraints in \eqref{eq:asp}, rendering the problem infeasible. Consequently, the \gls{iis} contains a set of query constraints and main problem constraints, i.e., 
     $IIS(C_{\gls{asp}}) = {C'}_{Q} \cup {C'}$ with ${C'}_{Q}\subseteq C_{Q}$, ${C'}\subseteq C$, and ${C'}_{Q}, {C'} \neq \emptyset$. 

     \item \textbf{Minimality Infeasibility}. Here, some query constraints conflict with some main problem constraints, and the minimality constraint in \eqref{eq:asp}. More intuitively, while the user's query is compatible with the problem's restrictions, it yields a less optimal solution. 
     Consequently, the \gls{iis} contains a set of query constraints, main problem constraints, and the minimality constraint. 
\end{itemize}
 
\begin{table}[t]
    \caption{$IIS$ from Example \ref{example:ASP}. We indicate the constraint types (in different colours) and the elements of each constraint.}
    \label{tab:iis:example}
    \centering

    \begin{tabularx}{\columnwidth}{ccl}
    \toprule
     Id &  Type &  Values \\
    \midrule
     $q$ &  Query &  Programming Fundamentals, Semester {$4$} \\
     $c_1$ &  \textcolor{olive}{Completion} &  Programming Fundamentals  \\
     $c_2$ &  \textcolor{orange}{Offer} &  {Programming Techniques} \textbf{not offered} in Autumn  \\
     $c_3$ &  \textcolor{orange}{Offer} &  {Object Oriented Prog.} \textbf{not offered} in Autumn  \\
     $c_4$ &  \textcolor{orange}{Offer} &  {Operating System Prog.} \textbf{not offered} in Spring  \\
     $c_5$ &  \textcolor{blue}{Precedence} &  {Prog. Fund.} \textbf{before} {Prog. Techniques} \\
     $c_6$ &  \textcolor{blue}{Precedence} &  {Prog. Fund.} \textbf{before} {Object Oriented Prog.} \\
     $c_7$ &  \textcolor{blue}{Precedence} &  {Prog. Tech.} \textbf{or} {Obj. Or. Prog.} \textbf{before} {OS Prog.} \\
     $c_8$ &  \textcolor{red}{Core} &  {Operating System Programming} \\
    \bottomrule
    \end{tabularx}
\end{table}
\normalsize

\begin{example}\label{example:KP3}
    Following example \ref{example:ASP}, Table \ref{tab:iis:example} shows the $IIS$ for the \gls{asp} in equation \eqref{eq:asp:example}. In this example, the $IIS$ contains 8 main problem constraints (1 completion constraint, 3 offer constraints, 3 precedence constraints, 1 core constraint) and the query constraint. Hence, since the minimality constraint is not within the $IIS$, this case belongs to the Query Infeasibility category. For completeness, Table \ref{tab:iis:example:math} in Appendix \ref{iis:example} shows the mathematical encoding of the constraints within the \gls{iis}. 
\end{example}

\subsection{Constructing the \emph{Graph of Reasons}}\label{subsec:ordering}

\begin{figure*}[t]
        \centering
    \includegraphics[width=\textwidth]{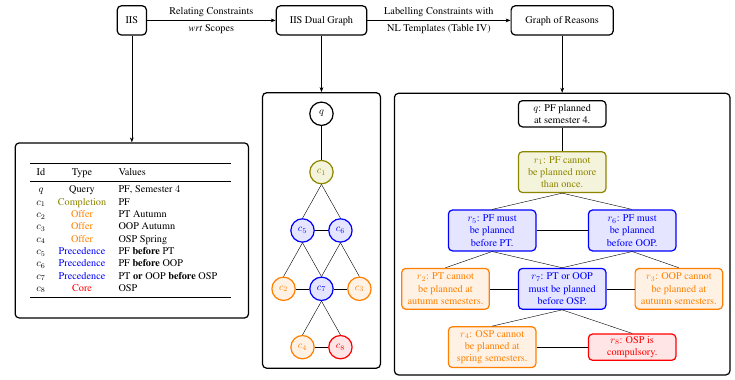}
    \caption{Process to build a graph of reasons applied to our running example: (1) relating constraints within the IIS to yield its dual graph, and (2) labelling constraints in the dual graph to build the graph of reasons.}
\label{fig:megafig}
\end{figure*}

\begin{table}[b]
    \caption{Natural Language Templates for each type of \gls{dpp} constraint in \eqref{eq:dpp_mip}. We report the Natural Language Templates for each type of \gls{rcpsp} constraint in Table \ref{tab:templates_rcpsp} in Appendix \ref{subsec:rcpsp_constraint_labels}. }
    \centering
    \label{tab:templates_dpp}
    \setlength{\tabcolsep}{3pt}
    
    \begin{tabularx}{\columnwidth}{lp{0.66\columnwidth}}
        \toprule
        (i) \textcolor{olive}{Completion} \eqref{eq:con1_dpp}: & ``\emph{Subject $u$ cannot be planned more than once}'' \\
        (ii) \textcolor{cyan}{Credits} \eqref{eq:con2_dpp}: & ``\emph{$d_l$ credits must be planned at semester $l$}'' \\
        (iii) \textcolor{orange}{Offer} \eqref{eq:con3_dpp}: & ``\emph{Subject $u$ cannot be planned at autumn/spring semesters}'' \\
        (iv) \textcolor{red}{Core} \eqref{eq:con4_dpp}: & ``\emph{Subject $u$ is compulsory}'' \\
        (v) \textcolor{blue}{Precedence} \eqref{eq:con5_dpp}: & ``\emph{Subject $h$ must be planned before Subject $j$}'' \\
        (vi) \textcolor{brown}{Major} \eqref{eq:con6_dpp}: & ``\emph{Students must complete at least one major}'' \\
        (vii) \textcolor{yellow}{Major core} \eqref{eq:con7_dpp}: & ``\emph{Subject $u$ is compulsory to complete major $M$}'' \\
        (viii) \textcolor{magenta}{Eligible} \eqref{eq:con8_dpp}: & ``\emph{$n_{SM^{elig}}$ eligible subjects from the set of subjects {U} must be complete to obtain major {M}}'' \\
        \bottomrule
    \end{tabularx}
\end{table}
\normalsize

Having established how we compute the set of infeasible irreducible constraints (i.e., the \gls{iis}), we now address the challenge of relating and providing a semantic interpretation to the constraints within an \gls{iis} in a way that enables the \gls{llm} to generate more precise and meaningful explanations ---ones that convey why the user query cannot be satisfied.
This is necessary because, as we later demonstrate in our experiments, inferring the relationships between constraints from an \gls{iis} to build explanations is not an easy \emph{reasoning} task for an \gls{llm}.
Indeed, \glspl{llm} might hallucinate and generate factually incorrect explanations when tackling such a complex task because they have limitations with abstract reasoning \cite{gendron2023large} and summarisation \cite{afzal2023challenges}.

Along these lines, we show that, by exploiting the structural relations among constraints, we can organise them in a graph, the so-called \emph{graph of reasons}, whose structure eases the understanding of the conflicts between constraints within the \gls{iis} (in contrast with the unstructured list of constraints in the \gls{iis}). Ultimately, our purpose is to use such a graph as a prompt to make the knowledge inherent in the constraints available to the \gls{llm}. This enables the \gls{llm} to generate explanations that are more complete and readable. 

To this end, we rely on well-known concepts from the Constraint Reasoning literature, as explained hereafter. 
Hypergraphs \cite{dechter2022reasoning} represent how a set of constraints relates to decision variables. 
A \emph{hypergraph} 
$\mathcal{H}=(V,S)$ is a pair where $V$ is a set of vertices, and $S=\{S_1,\dots,S_l\}$, $S_i\subseteq V$, is a set of subsets of $V$ called hyperedges. Notice that the vertices represent the set of decision variables of our model, and a hyperedge $S_i$ is the scope of constraint $c_i$, which is the subset of decision variables involved in the constraint.   
However, hypergraphs do not directly represent the relations among constraints; instead, this is the purpose of dual graphs.
Formally, a \emph{dual graph} \cite{dechter2022reasoning} $D=\{S,I\}$ is an undirected graph whose vertices $S$ are the hyperedges of $\mathcal{H}$ and $(S_i,S_j)\in I$ iff $S_i \cap S_j \neq \emptyset$. 
Intuitively, each node of a dual graph corresponds to a constraint, and each edge indicates that two constraints share some variables. 
Therefore, the dual graph of an \gls{iis} captures the relations between its constraints. 


Figure \ref{fig:megafig} showcases our approach for building the graph of reasons from the \gls{iis} in Table \ref{tab:iis:example}.
For instance, constraint $c_1$, which ensures that Programming Fundamentals is planned at most once, is related to the query constraint enforcing the planning of Programming Fundamentals in the fourth semester because they share decision variables; hence, there is an edge connecting them. More formally, both constraints are related because they share decision variables; hence, there is an edge connecting them in the graph of reasons.
In contrast, other constraints such as $c_2$, which veto the plan Programming Techniques in Autumn Semesters, are not \textit{directly} related to the query constraint. Nonetheless, $c_2$ is linked to the precedence constraint $c_5$, which ensures that Programming Fundamentals is planned before Programming Techniques. 
Since $c_5$ is also related to $c_1$, an indirect relationship between $c_2$ and the query constraint can be established through this chain of dependencies, formally represented by a path of edges between $c_2$ and the query constraint.

After having established the dual graph of constraints, following the approach by \cite{pozanco2022explaining}, we use \gls{nl} templates to provide a semantic interpretation to each constraint in such a graph, as shown in Figure \ref{fig:megafig}.\footnote{Notice that translating queries and labelling constraints using \gls{nl} templates are the only problem-dependent steps that must be carried out by an expert, in contrast with the rest of our approach, which is domain-independent.
As an example, Table \ref{tab:templates_dpp} shows the \gls{nl} templates involved in our running example for each type of \gls{dpp} constraints in Section \ref{subsec:mip_prob:dpp}. We show the \gls{nl} templates for each type of \gls{rcpsp} constraints in the appendix. } 
Ultimately, a ``\emph{reason}'' takes the form of the semantic interpretation of a constraint by means of a \gls{nl} template. 
Therefore, we refer to the dual graph of the semantically-characterised constraints as the \emph{graph of reasons}. 

A desirable property that characterises the graph of reasons is that it is not ``disconnected'', i.e., it involves reasons that are always related to each other. This property can be directly expressed in terms of the graph-theory concept of \emph{connectedness}, as formalised by Theorem \ref{theo:con}.

\begin{theorem}\label{theo:con} 
    Given an \gls{iis}, the \gls{iis} dual graph $D$ is connected.
\end{theorem}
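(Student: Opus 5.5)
We argue by contradiction, using the irreducibility in Definition \ref{def:iis} together with the fact that constraints with disjoint scopes can be satisfied independently. Suppose the dual graph $D$ of an \gls{iis} $C'$ is disconnected. Then the vertex set splits into two nonempty parts $C_1$ and $C_2$, $C_1 \cup C_2 = C'$, with no edge between them. By the definition of the dual graph, every constraint in $C_1$ then shares no variable with any constraint in $C_2$. Let $V_1$ and $V_2$ be the unions of the scopes of the constraints in $C_1$ and $C_2$. These sets are disjoint.

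The key step is a product argument. Since $C_1$ and $C_2$ are nonempty proper subsets of $C'$, Definition \ref{def:iis} makes each of them feasible. So there is an assignment $a_1$ of $V_1$ satisfying every constraint in $C_1$, and an assignment $a_2$ of $V_2$ satisfying every constraint in $C_2$. Because $V_1 \cap V_2 = \emptyset$, we can combine them into a single assignment $a$: it agrees with $a_1$ on $V_1$, with $a_2$ on $V_2$, and is arbitrary on any remaining variables of $V$.

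Each constraint is a subset of the product of the domains of its own scope, so whether $a$ satisfies it depends only on $a$ restricted to that scope. Hence $a$ satisfies all of $C_1$ and all of $C_2$, so $C'$ is feasible. This contradicts the fact that an \gls{iis} is infeasible, so $D$ must be connected.

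The main obstacle is making the "depends only on its scope" step rigorous for the constraints that actually appear in an \gls{asp}. These include the minimality constraint $f(\textbf{x}) \leq f^{*}$ and constraints over continuous variables, which do not literally fit the finite-domain definition. Two points need attention:
\begin{itemize}
\item The scope of each constraint must be taken to be exactly the set of variables it depends on.
\item Each variable domain must be nonempty, so that the unconstrained variables can be assigned arbitrarily.
\end{itemize}
For domain restrictions such as integrality, we would treat them as part of the variable domains rather than as separate constraints. With these conventions the combination argument applies verbatim.

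Two degenerate cases also need checking:
\begin{itemize}
\item An \gls{iis} with a single constraint gives a trivially connected graph.
\item A constraint with empty scope that is unsatisfiable on its own forms the \gls{iis} by itself, by minimality.
\end{itemize}
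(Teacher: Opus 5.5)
Your proposal is correct and follows the same route as the paper: split the disconnected dual graph into two parts with disjoint scopes, use irreducibility to make each part feasible, and combine the two assignments to contradict infeasibility of the IIS. The only difference is that you spell out the combination step (satisfaction depends only on a constraint's scope, and domains are nonempty), which the paper asserts in a single sentence.
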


\begin{proof}
    We prove by contradiction that a disconnected \gls{iis} dual graph, i.e., a graph with multiple disconnected components, is not possible to obtain due to the infeasible and irreducible properties of an \gls{iis} (Definition \ref{def:iis}). 
    
    Consider an \gls{iis} graph that is disconnected in two subgraphs, $A = (C_A,E_A)$ and $B = (C_B,E_B)$ such that
    \begin{align*}
        C_A \cup C_B &= IIS, \\
        C_A \cap C_B &= \emptyset, \\
        S_A \cap S_B &= \emptyset,
    \end{align*}
    where $S_A,S_B$ are the scopes of the set of constraints $C_A$ and $C_B$ respectively. Notice that, since the graph is disconnected, $A$ and $B$ subgraphs do not share variables, i.e., $S_A \cap S_B = \emptyset$. 
    Then, since the subset of constraints $C_A$ and $C_B$ is (disjoint) subsets of the \gls{iis}, they are feasible (see Definition \ref{def:iis}). 
    However, since the two constraint sets have different variables (domains), their union would not affect the feasibility of the resulting system of constraints. 
    As a result, we would have a feasible \gls{iis}, 
    which is a contradiction in itself.
    Therefore, it is not possible to have a disconnected \gls{iis} graph. 
\end{proof}

\subsection{Providing Textual Explanations with LLMs}
\label{subsec:text}

\glspl{llm} have emerged as a powerful tool for a wide range of tasks, including, for example, machine translation, reasoning, summarisation, and question-answering \cite{hadi2023survey,zhao2023survey}. 
However, it is well-known that \glspl{llm} present limitations with abstract reasoning \cite{gendron2023large} and summarisation \cite{afzal2023challenges}.

As far as we are concerned, a consequence of these limitations is that \glspl{llm}' responses tend to take the implicit assumption made in the user prompt for granted, rather than independently verifying their truthfulness. 

To illustrate this phenomenon, we revisit the degree plan from Example \ref{example:knapsack} and consider the following user query: ``\textit{Why is Programming Fundamentals not planned at semester 3?}''. 
Notice that, in contrast with the query in Example \ref{example:KP2} ``\textit{Why is Programming Fundamentals not planned at semester 4?}'' ---suggesting an invalid alternative solution (i.e., one that violates some hard constraints)---, this query suggests an alternative solution that is equally valid and optimal, i.e., a solution that does not violate any constraint and that has the same optimal total utility. 
In this case, our approach yields an empty \gls{iis}, correctly representing the fact that the user-suggested alternative is equally good.\footnote{In such cases, the explanation provided by \gls{exmip} consists of indicating that the alternative suggested yields a degree plan that is as good as the optimal one. Additionally, we can provide a solution compliant with the user query by solving the resulting \gls{asp} discussed in Section \ref{subsec:asp}.} 
In contrast, when straightforwardly prompting an \gls{llm} to generate an explanation in response to the query at hand, the \gls{llm} produces the following explanation:\footnote{For this test, we provided the \gls{llm} only with the problem solution and the instance data. We show the first sentence of the explanation for the sake of brevity. } ``\textit{Programming Fundamentals is not planned in semester 3 because doing so would violate prerequisite timing for other subjects that are scheduled earlier in the plan...}'' 
Notice that the \gls{llm} implicitly assumes that the suggested alternative is indeed invalid, without independently verifying the truthfulness of this assumption. In fact, \gls{llm} are notoriously reluctant to contradict the user, a phenomenon ---known as ``\textit{sycophancy}''~\cite{sharma2023towards,perez2023discovering}--- that can lead to false statements, as in this case.
Notice that this aspect is particularly important in the context of explaining the solutions of optimisation problems, for which multiple, equally good solutions may exist, as discussed in Section~\ref{subsec:comp_iis}.

Against this background, our methodology, which provides the information encoded by the \gls{iis} as a structured, semantically-characterised graph, as discussed in the above section, is key to guiding \glspl{llm} in generating \emph{correct} textual explanations.

In practice, we build a prompt that uses the graph of reasons as an input to the \gls{llm} to generate such an explanation. We employ a simple zero-shot prompt strategy \cite{sahoo2024systematic} that contains the following information.

\renewcommand{\theenumi}{\roman{enumi}} 
\renewcommand{\labelenumi}{\theenumi.}

\begin{enumerate}
    \item A \textbf{context} of the optimisation problem to solve. In our case, a description of the \gls{dpp}, its goal, and its restrictions. Additionally, we provide the optimal solution, i.e., the optimal degree plan. 
    \item The \textbf{question} posed by the user. 
    \item A \textbf{textual encoding} of the graph of reasons. Since the graph of reasons is a visual representation, we include a textual encoding of it in the prompt. This encoding specifies the node labels ---representing the reasons--- and the relationships between them, corresponding to the graph's edges. 
    \item The \textbf{instructions} of the task to do, i.e., writing a structured explanation using the encoding of the graph of reasons. 
\end{enumerate}

A myriad of prompt engineering methods have been discussed in the literature. 
For instance, techniques such as chain-of-thought prompting \cite{wei2022chain} and prompt ensembling \cite{zhao2021calibrate} have demonstrated noteworthy improvements in tasks requiring multi-step reasoning or factual accuracy. 
Contrarily, for other, more general tasks such as translation, zero-shot prompts can significantly outperform few-shot prompts due to semantic contamination from the examples (shots) \cite{reynolds2021prompt}. 
Additionally, minor variations in phrasing or formatting can lead to substantial differences in model outputs across a wide range of tasks \cite{wei2022chain, zhao2021calibrate}. 
This variability highlights the importance of prompt engineering.

Despite the variety of prompt engineering methods discussed in the literature, we adopt a zero-shot prompting strategy, as the scope of this paper is to showcase how \glspl{llm} can be leveraged to provide explanations, rather than to conduct an extensive evaluation of different prompting strategies.

\begin{example}\label{example:text:dpp}
    Considering the computed \gls{iis} and its graph of reasons in Figure \ref{fig:megafig}, we provide the textual explanation generated with a \gls{llm} to answer the question ``\emph{Why is Programming Fundamentals not planned at semester 4?}''. The resulting explanation is the following.\footnote{We chose \emph{gpt-4o} to compute textual explanations because it was the top LLM in the Arena Leaderboard (\url{https://arena.ai/leaderboard}) at the time of running the experiments (March 2025). } 
    ``\emph{Programming Fundamentals is not planned in semester 4 because it must be taken earlier to satisfy the prerequisite requirements of later subjects, particularly Programming Techniques and Object Oriented Programming. These subjects, in turn, need to be completed before taking Operating Systems Programming, a compulsory subject that can only be taken in the autumn. Additionally, Programming Techniques and Object-Oriented Programming cannot be scheduled in the autumn semesters, further constraining their placement. Since subjects cannot be scheduled more than once, Programming Fundamentals must be scheduled early enough to enable this sequence, which would make semester 4 too late for it.}''
    
\end{example}
Notice that the explanation in Example \ref{example:text:dpp} is correct: there are no imprecisions or mistakes, and it is complete, mentioning all the reasons in the graph. 
In appendix \ref{sec:prompt}, we discuss the exact prompt used in this example.

\section{Experimental Evaluation}
\label{sec:exp}

The goal of our experimental evaluation is twofold: we aim to evaluate (i) whether our approach can produce explanations that are complete and easy to understand (Section \ref{subsec:quality}), and (ii) the empirical hardness of computing our explanations, allowing us to characterise which problem instances can be tackled in real-time for users (Section \ref{subsec:overhead}).

\subsection{Experimental Setting}\label{subsec:data}

We now describe our experimental setting, including the considered optimisation\allowbreak-based benchmark (Section \ref{subsec:prob_instances}) scenarios and the methodology used to conduct our experiments (Section \ref{subsec:exp_methodology}). 

\subsubsection{Optimisation-Based Benchmark Scenarios}\label{subsec:prob_instances}

We test our approach in two different optimisation problems, the \gls{dpp} and the \gls{rcpsp}. We do so to (i) showcase its generality and applicability to real-world problems and (ii) evaluate its scalability with computationally hard problem instances. In this section, we detail the problem instances used in our experimental evaluation. 

\paragraph{\gls{dpp}}
In our first benchmark problem, we consider a real-world dataset previously employed in \cite{lera2025computing}. The dataset comprises a bachelor's degree in Information and Communication Technology (ICT) from Western Sydney University, including $|\mathcal{U}| = 82$ units, of which $|\mathcal{U}^{c}| = 16$ are core units, considering 7 major programs. Each unit provides $10$ credits. Moreover, based on the \gls{sfia} skill framework \cite{sfia_reference}, units have been mapped with the acquired level upon completion of professional skills. Furthermore, a student must complete $c_T = 240$ credits to obtain the degree, typically over 6 semesters with 40 credits per semester.
Lastly, we consider a real-world dataset of $118$ jobs from the \emph{APS Digital Career Pathways} dataset \cite{job_dataset}, which specifies the required \gls{sfia} skill levels for each position.

\paragraph{\gls{rcpsp}}
Among the many variants of the \gls{rcpsp} \cite{hartmann2022}, our second benchmark problem is the single-mode \gls{rcpsp}, in which each activity can be completed in only one way. 
The problem instances used come from the \gls{psplib} \cite{kolisch1997psplib}.\footnote{\url{https://www.om-db.wi.tum.de/psplib/}. Last consulted: April 2024.} Such a library employs ProGen \cite{kolisch1995characterization}, an algorithm that generates problem instances with varying parameters, such as the number of activities, network complexity, or resource strength. We use 1440 instances with $|J|\in \{30,60,90\}$, generated with 48 combinations of parameters (see \cite{kolisch1997psplib}) and 10 different seeds.

\subsubsection{Experimental Methodology}\label{subsec:exp_methodology}

As our first step, we solve the benchmark problem instances obtained as described in Section \ref{subsec:prob_instances}.
We then generate queries as described in Section \ref{sec:query} (10 and 8 query types for the \gls{dpp} and \gls{rcpsp}, respectively). 
As for the \gls{dpp}, we generate 5 queries for each query type, for a total of 50 queries per optimal solution. Since we use a dataset of $118$ jobs (each with a different optimal solution), we generate $5900$ queries in total.
Once the queries are generated, we generate their corresponding \glspl{asp} according to Section \ref{subsec:asp}, and we calculate their \glspl{iis} according to Section \ref{subsec:comp_iis}.
Finally, we compute the graphs of reasons according to Section \ref{subsec:ordering}.

We solve all instances and compute \glspl{iis} with CPLEX v22.1.0.\footnote{Code and data for the experimental evaluation available at: \url{https://github.com/RogerXLera/LLMxMIP}.} 
We run our tests on a machine with a 10-core, 2.2 GHz CPU and 16GB of RAM. We limit solving time to 10 hours per problem instance and per explanation. 

Then, we proceed to generate our \gls{llm}-generated textual explanations that answer the above-mentioned queries.
Since we aim to evaluate whether the structure conveyed by the graph of reasons is beneficial to the \gls{llm},\footnote{We use \emph{gpt-4o} (\url{https://openai.com/es-ES/index/hellox-gpt-4o/}) to generate our textual explanations.} we prompt it with two different strategies.
On the one hand, we employ a prompt that provides the constraints of the \gls{iis} semantically, characterised using natural-language templates, but without any structure. On the other hand, we use a prompt that encodes the graph of reasons as text. In detail, this encoding specifies the graph's nodes, i.e., the constraints of the \gls{iis} semantically characterised with natural language templates (as in the prompt with only the constraints of the \gls{iis}), and the graph's edges, i.e., the relations between the constraints in the \gls{iis}. We show examples of both types of prompts in Tables \ref{tab:prompt:gor} and \ref{tab:prompt:iis} of Appendix \ref{sec:prompt}.

To quantify the benefits of using the graph of reasons as input to the \gls{llm}, we compute two classic \gls{nl} metrics on the generated explanation:
\begin{itemize}
\item \emph{completeness}: the percentage of reasons in the explanation with respect to the number of reasons in the \gls{iis} related to the query;  
and
\item \emph{readability}: difficulty in understanding the text of an explanation. 
\end{itemize}

In general, to compute the readability of an explanation, we use the \gls{gfi} \cite{gunning1952technique}, which estimates the years of formal education a person needs to understand a text. 
Additionally, we count the number of words in an explanation as a measure of its length.

To conduct the above-mentioned readability and completeness analysis, we select a representative sample of different queries both for the \gls{dpp} and the \gls{rcpsp}, such that (i) it contains different types of query constraints, and (ii) the \gls{iis} computed from an \gls{asp} is not the empty set.\footnote{If the \gls{iis} is empty, the \gls{asp} is feasible, which indicates that the user-desired solution is also optimal. In such a case, the explanation states that the proposed optimisation scenario is also optimal; thus, the query can be fulfilled. } 
More specifically, we selected 130 distinct queries for both the \gls{dpp} and the \gls{rcpsp} to compare two types of explanations: those generated using a graph of reasons and those based on an \gls{iis}.

\subsection{Completeness and Readability Results}
\label{subsec:quality}

\begin{table*}[t]
    \caption{Explanations generated using a prompt with the graph of reasons (left) and a prompt with the \gls{iis} (right) to answer the query of the running example: ``\emph{Why is Programming Fundamentals not planned at semester 4?}''. We highlight the reasons in \textit{\textcolor{blue}{blue}}. }
    \centering
    \label{tab:explanations}
    \setlength{\tabcolsep}{5pt}
    \small
    \begin{tabularx}{\textwidth}{p{0.48\textwidth}|p{0.48\textwidth}}
        Programming Fundamentals is not planned in semester 4 because \textit{\textcolor{blue}{it must be taken earlier to satisfy the prerequisite requirements of later subjects}}, particularly \textit{\textcolor{blue}{Programming Techniques and Object Oriented Programming}}. \textit{\textcolor{blue}{These subjects}}, in turn, \textit{\textcolor{blue}{need to be completed before taking Operating Systems Programming}}, \textit{\textcolor{blue}{a compulsory subject}} that \textit{\textcolor{blue}{can only be taken in the autumn}}. Additionally, \textit{\textcolor{blue}{Programming Techniques}} and \textit{\textcolor{blue}{Object-Oriented Programming}} cannot be scheduled in the autumn semesters, further constraining their placement. Since \textit{\textcolor{blue}{subjects cannot be planned more than once, Programming Fundamentals}} must be scheduled early enough to enable this sequence, making semester 4 too late for it. & Programming Fundamentals is scheduled in semester 2 rather than semester 4 to comply with prerequisite requirements and ensure a feasible progression through the curriculum. Since \textit{\textcolor{blue}{this subject must be taken before others like Programming Techniques and Object Oriented Programming}}, delaying it to a later semester would prevent the student from enrolling in these dependent courses on time. Additionally, university rules may restrict when certain subjects can be taken, and \textit{\textcolor{blue}{each subject can only be planned once, so placing Programming Fundamentals}} earlier allows for a coherent and complete academic path within the 6-semester limit.\\
    \end{tabularx}
\end{table*}
\normalsize

Results show that providing the graph of reasons to an \gls{llm} leads to several benefits when generating textual explanations. As an illustrative example for the \gls{dpp}, Table \ref{tab:explanations} shows the explanations generated with and without a graph of reasons. Additionally, Table \ref{tab:explanation_comparison} shows which reasons out of the \gls{iis} are mentioned in each explanation. 
Observe that the explanation using the graph of reasons in the prompt is more complete since it mentions all the reasons, while the one using the \gls{iis} only mentions three (out of eight).

Such an example illustrates a general trend observed in the results presented in Table \ref{tab:explanation_results}, which reports the average completeness, the average \gls{gfi}, and the average number of words of 65 pairs of explanations (one generated using the graph of reasons prompt and the other using the \gls{iis} prompt) for both problem domains.

As for the \gls{dpp}, the average completeness is $84\%$ for explanations generated with a graph of reasons, while the average completeness for explanations generated with an \gls{iis} is $64\%$. This represents a statistically significant gain of $20\%$ when using the graph of reasons as a prompt. 
We also notice that the more complete explanations, thanks to graphs of reasons, are also longer, as shown in Table \ref{tab:explanation_results}. Indeed, explanations generated with a graph of reasons prompt are $32 \%$ longer ($20$ words more) on average than explanations generated with an \gls{iis} prompt. 
Nonetheless, although an explanation generated from the graph of reasons uses, on average, more words, its readability (GFI) is similar to explanations generated using the \gls{iis}.

\begin{table}[t]
    \caption{Reasons mentioned in the explanations (shown in Table \ref{tab:explanations}) generated using a prompt with the Graph of Reasons (GoR) and a prompt with the IIS. }
    \centering
    \label{tab:explanation_comparison}
    \setlength{\tabcolsep}{5pt}
    \begin{tabularx}{\columnwidth}{p{0.75\columnwidth}cc}
        \toprule
        \textbf{Reasons} & \textbf{GoR} & \textbf{IIS} \\
        \midrule
        $r_1$: PF cannot be planned more than once & \textcolor{green}{\checkmark} & \textcolor{green}{\checkmark} \\
        $r_2$: PT cannot be planned at autumn semesters & \textcolor{green}{\checkmark} & \textcolor{red}{\ding{55}} \\
        $r_3$: OOP cannot be planned at autumn semesters & \textcolor{green}{\checkmark} & \textcolor{red}{\ding{55}} \\
        $r_4$: OSP cannot be planned at spring semesters & \textcolor{green}{\checkmark} & \textcolor{red}{\ding{55}} \\
        $r_5$: PF must be planned before PT & \textcolor{green}{\checkmark} & \textcolor{green}{\checkmark} \\
        $r_6$: PF must be planned before OOP & \textcolor{green}{\checkmark} & \textcolor{green}{\checkmark} \\
        $r_7$: PT or OOP must be planned before OSP & \textcolor{green}{\checkmark} & \textcolor{red}{\ding{55}} \\
        $r_8$: OSP is compulsory & \textcolor{green}{\checkmark} & \textcolor{red}{\ding{55}} \\        
        \bottomrule
    \end{tabularx}
\end{table}
\normalsize

\begin{table}[t]
    \caption{Average completeness, number of words, and Gunning Fox Index (GFI) per explanation for both prompt types. }
    \centering
    \label{tab:explanation_results}
    \setlength{\tabcolsep}{2.5pt}
    \begin{tabularx}{\columnwidth}{llccc}
        \toprule
        \textbf{Problem} & \textbf{Metric} & \textbf{Graph Prompt} & \textbf{IIS Prompt} & \textbf{p-value} \\
        \midrule
        \multirow{3}{*}{DPP} & Completeness (\%) & $84\pm3$ & $64\pm4$ & $2.4\cdot10^{-8}$ \\
        & Number of words & $82\pm4$ & $62\pm2$ & $2.1\cdot10^{-12}$ \\
        & GFI & $20.8\pm0.3$ & $20.9\pm0.3$ & $0.86$ \\
        \midrule
        \multirow{3}{*}{RCPSP} & Completeness (\%) & $85\pm 3$ & $69\pm 4$ & $2.1\cdot10^{-7}$ \\
        & Number of words & $107\pm4$ & $85\pm3$ & $9.7\cdot10^{-13}$ \\ 
        & GFI & $15.6\pm0.2$ & $16.5\pm0.3$ & $5.7\cdot10^{-3}$ \\
        \bottomrule
    \end{tabularx}
\end{table}
\normalsize

As to the \gls{rcpsp}, the average completeness is $85\%$ for explanations generated with a graph of reasons prompt, while the average completeness for explanations generated with a \gls{iis} prompt is $69\%$ (therefore, the gain in completeness is $16\%$).
On the other hand, in contrast with the \gls{dpp} case, for the \gls{rcpsp}, the readability index is significantly better for explanations generated with a graph of reasons prompt than explanations generated with an \gls{iis} prompt. Indeed, providing the graph of reasons results in explanations that are easier to understand (\gls{gfi} $=15.6$) with respect to those generated with the \gls{iis} (\gls{gfi} $=16.5$).\footnote{Explanations generated using a graph of reasons contain more concise sentences, which is an indicator directly considered to compute the \gls{gfi}. }

\begin{figure}[t]
        \centering
    \includegraphics[width=\linewidth]{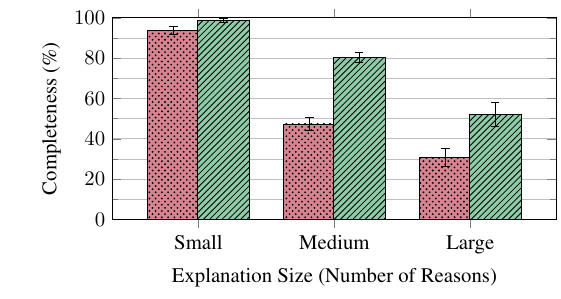}
    \caption{Average percentage of reasons mentioned in the explanations generated using the \gls{iis} (\textcolor{red}{red}) and using the graph of reasons (\textcolor{green}{green}). We group the explanations into three groups depending on the number of reasons: (i) small (fewer than five reasons), (ii) medium (between five and ten reasons), and (iii) large (more than ten reasons).}
    \label{fig:number_of_reasons}
\end{figure}

To better understand how using a graph of reasons affects the completeness of the explanations, Figure \ref{fig:number_of_reasons} shows the aggregated average completeness for both domains. 
We report the average completeness across different explanation groups, categorised by size ---defined as the number of reasons in the graph of reasons (and the corresponding \gls{iis}): (i) small (fewer than five reasons), (ii) medium (between five and ten reasons), and (iii) large (more than ten reasons).
For small explanations (i.e., with fewer than five reasons), the \gls{llm} can generate complete explanations even without a graph of reasons. 
However, for medium and large explanations (i.e., more than five reasons), the completeness drops significantly when the \gls{llm} is given only an \gls{iis}. 
In more detail, for instances with a medium number of reasons (between five and ten reasons), the explanations generated using a graph of reasons mention $\sim 30\%$ more reasons on average than the explanations generated with only an \gls{iis}. Similarly, large explanations ($> 10$ reasons) using a graph of reasons mention $\sim 20\%$ more reasons on average than the explanations generated with an \gls{iis}.

\begin{figure}[t]
        \centering
    \includegraphics[width=\linewidth]{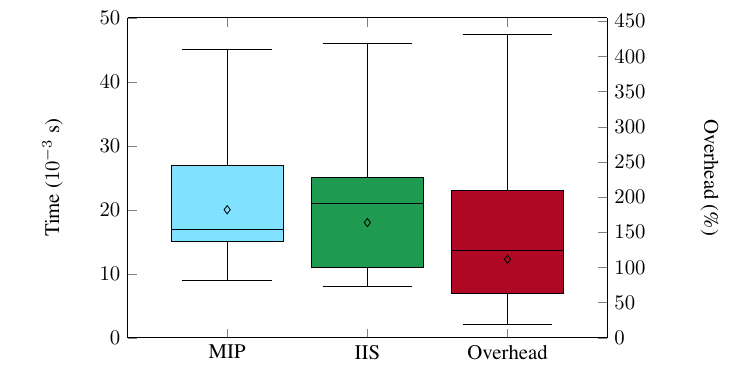}
    \caption{Boxplot of the Solving time of the \gls{mip} problem, the \gls{iis} runtime (left y axis), and the Overhead for the \gls{dpp} (right y axis).}
    \label{fig:boxplot_dpp}
    \end{figure}
\begin{figure}[t]
        \centering
        \includegraphics[width=\linewidth]{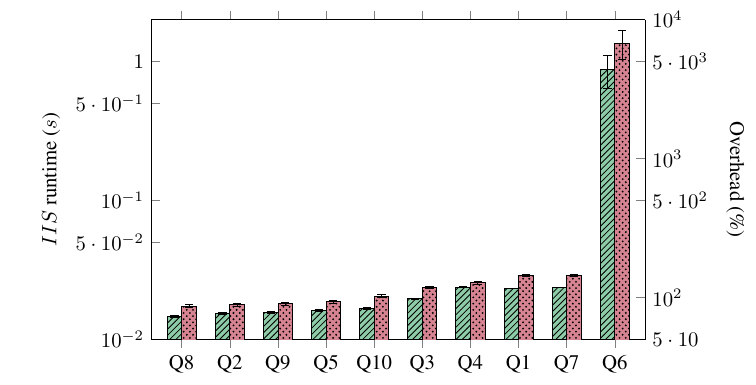}
    \caption{Average \gls{iis} computation time (\textcolor{green}{lines}, left y-axis) and overhead (\textcolor{red}{dots}, right y-axis) for different query types for the \gls{dpp}.}
    \label{fig:barplot_dpp}
\end{figure}

\begin{figure}[t]
        \centering
    \includegraphics[width=\linewidth]{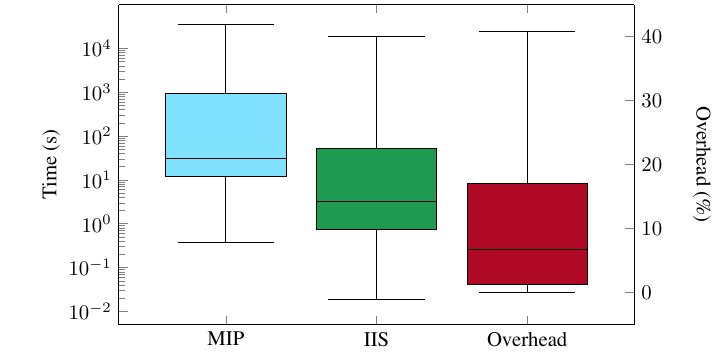}
    \caption{Boxplot of the Solving time of the \gls{mip} problem, the \gls{iis} runtime (left y axis), and the Overhead for the \gls{rcpsp} (right y axis).}
    \label{fig:boxplot_rcpsp}
    \end{figure}
\begin{figure}[t]
        \centering
        \includegraphics[width=\linewidth]{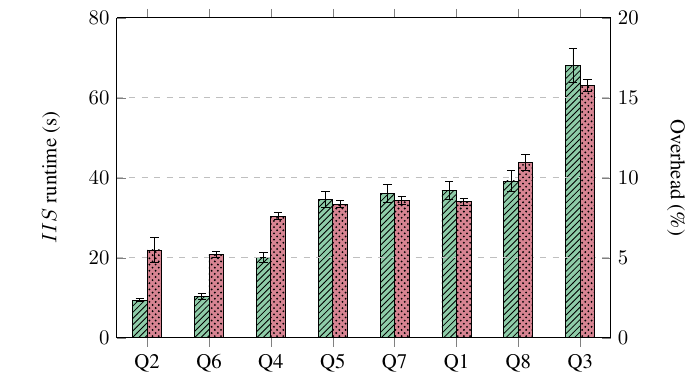}
    \caption{Average \gls{iis} computation time (\textcolor{green}{lines}, left y-axis) and overhead (\textcolor{red}{dots}, right y-axis) for different query types for the \gls{rcpsp}.}
    \label{fig:barplot_rcpsp}
\end{figure}

\subsection{Empirical Hardness Results}
\label{subsec:overhead}

To evaluate the empirical hardness of computing explanations, we will measure (i) the runtimes needed to compute the \glspl{iis} required to build explanations, and (ii) the \emph{time overheads} of computing explanations with respect to solving optimisation problems. 
Thus, given a problem and a query, we define the \emph{time overhead} to build an explanation for the query as the ratio between the time to compute the \gls{iis} and the time to compute the optimal solution for the problem. 
Therefore, our study compares the hardness of solving optimisation problems with that of computing their explanations. 
This allows us to characterise problem instances and compute explanations in real time for users, ensuring a smooth interaction. 

Notice that generating an explanation involves additional steps beyond computing \glspl{iis}, including the construction of a graph of reasons and the formulation of textual explanations using an \gls{llm}. 
However, the cost of constructing the graph of reasons and generating textual explanations is negligible compared to the maximum runtimes required to compute the \glspl{iis}. 
Consequently, we consider only the \gls{iis} runtime when evaluating the complexity of explanation generation.


\paragraph{\gls{dpp}} Figure \ref{fig:boxplot_dpp} shows the \gls{mip} solving times, the \gls{iis} computation times, and the time overhead to compute explanations for \gls{dpp} problem instances. Notice that solving the problem and computing \glspl{iis} is very fast ($\sim 10^{-2}$ seconds) for the majority of cases. Thus, we can provide instant explanations for such a problem. We observe that the average overhead is close to $100\%$, indicating that explanations take the same time to compute as to solve the optimisation problem on average. 
In more detail, only $25\%$ of explanations take more than twice as long to compute as to solve the \gls{mip} problem. 
Figure \ref{fig:barplot_dpp} shows the impact of the query type (see Table \ref{tab:queries_dpp}) on the average runtime to compute an \gls{iis}, and consequently, the required overhead to compute an explanation. On the one hand, most query types are solved on average in approximately a hundredth of a second and have $\sim 100\%$ of time overhead. This result confirms that it is equally computationally costly to compute an explanation as it is to compute the optimal solution.  
On the other hand, computing explanations for query type Q6 is significantly harder than the others. It remains future work to establish why such a query is harder to explain than others. 

\paragraph{\gls{rcpsp}} Similarly to Figure \ref{fig:boxplot_dpp}, Figure \ref{fig:boxplot_rcpsp} shows the \gls{mip} solving times, the \gls{iis} computation times, and the time overhead to compute explanations for \gls{rcpsp} problem instances. 
Notice that solving such \gls{rcpsp} instances is harder than the considered real instance of the \gls{dpp} since harder instances take more than $10^{4}$ seconds to solve. 
However, for such a case, we also notice that computing \glspl{iis} is significantly faster than solving \gls{mip} problems. We observe that, in most cases, the overhead is lower than $40\%$, indicating that most explanations can be computed faster than half the time required to solve a problem instance. 
On the one hand, the median overhead to compute an explanation is $6.7\%$, and the median time is $3.3$ seconds. On the other hand, the median \gls{mip} solving time is $31.2$ seconds. Finally, we compute more than $75\%$ of the explanations in less than 1 minute. Notice that generating fast explanations is a key requirement for users of explainable systems to interact, according to the study by \cite{lakkaraju2022rethinking}. 
Figure \ref{fig:barplot_rcpsp} shows the impact of the query type (see Table \ref{tab:queries_rcpsp}) on the average runtime to compute an \gls{iis}, and the average overhead to compute an explanation. On the one hand, most query types are solved on average in less than 60 seconds and have less than $10\%$ of time overhead. 
On the other hand, some query types are significantly harder than others, e.g., Q3. Similarly to query Q6 for the \gls{dpp}, it remains future work to establish why such a query is harder to explain than others.

\section{Related Work}\label{sec:related}

In the last decade, building explanations to justify the outcomes of ``\emph{black box}'' \gls{ml} models has been mainly the focus of the so-called \gls{xai} community \cite{adadi2018peeking,Mill31122024,Mill31122024realworld,Berendse31122024}, while optimisation (our topic in this paper) has been studied less often in this context. 
Nonetheless, Miller \etal~\cite{miller2019explanation}, in one of the most prominent works of the \gls{xai} literature, identifies useful properties that we consider when building our explanations for optimisation.
According to Miller \etal, a \emph{good} explanation has to be: (i) \emph{contrastive} (in response to particular counterfactual cases); (ii) \emph{social} (to transfer knowledge as a part of a conversation); (iii) \emph{selected} (humans select one or two reasons as an explanation); and (iv) \emph{causal} (explanations with causal relationships are more important than referring to probabilities or statistical relations).
Along these lines, our explanations comply with the properties outlined by Miller \etal{} because: (i) they are \textit{contrastive}, as they explain why user-desired scenarios are not as good as the optimal solution; (ii) they are \textit{social}, as they can be delivered in an interaction dialogue between a user and a \gls{dss}; (iii) they are \textit{selected}, as they are based on an irreducible infeasible set of constraints; and (iv) they are \textit{causal}, since they reference the constraints and the objective function that restrict the output of the optimisation problem.

As mentioned above, the research on providing explanations for optimisation\allowbreak-based \gls{dss} is not as extensive. 
Some of the most commonly used techniques in explainability for optimisation include: (i) what-if analysis \cite{georgara2022building,korikov2021counterfactual}, which involves comparing the optimal solution with a user-desired alternative; (ii) constraint or rule analysis \cite{bogaerts2021framework,junker2004quickxplain,vasileiou2023logic}, which seeks to explain outcomes by reasoning about the problem’s constraints; and (iii) visualisation methods \cite{sartori2023stnweb,walker2020identifying}, which help interpret the performance of the optimisation algorithm. Since we leverage constraint reasoning techniques to provide contrastive explanations, here we discuss what-if and constraint analysis techniques.

Some approaches have been developed to explain the outcome of different classical combinatorial problems by analysing what-if scenarios \cite{korikov2021general,georgara2022building}. However, the explanations generated by such approaches lack some key properties enumerated by Miller \etal~\cite{miller2019explanation}. Although Georgara \etal~\cite{georgara2022building} provide contrastive explanations, they refer to numerical reasons instead of referring to causal relationships. Similarly, Korikov \etal ~\cite{korikov2021counterfactual,korikov2021general} generate counterfactual explanations that indicate the necessary numerical changes in the data of the problem to comply with a user-desired solution. 
Moreover, these approaches are not general; each one addresses a particular application domain. 
More importantly, they do not take advantage of recent advances in \gls{llm} to produce natural language explanations. Instead, they rely on abstract representations~\cite{zehtabi2024contrastive} or fixed templates~\cite{georgara2022building,pozanco2022explaining}, which might not be desirable when providing explanations as part of a conversation with a user~\cite{lakkaraju2022rethinking}. 
In contrast, here we propose \gls{exmip}, a domain-independent explanatory approach that generates contrastive, social, and selected explanations that explain why the optimal solution is better than the one posed by the user.  

Now, there exist approaches focused on constraint or rule analysis that help interpret the outcome of constrained problems. 
For instance, Nardi \etal~\cite{nardi2022graph} find justifications for the outcome of collective decision algorithms via analysing the fulfilled voting rules. 
Bogaerts \etal~
\cite{bogaerts2021framework} give step-wise explanations for \gls{csp} via iteratively analysing the problem constraints. 
Additionally, some works focus on interpreting the infeasibility in Boolean Satisfiability (SAT) problems by extracting a \gls{mus} of clauses \cite{silva2010minimal}, a concept analogous to an \gls{iis} in the context of optimisation problems. 
As a relevant and recent example, Vasileiou et al. \cite{vasileiou2025trace} 
base the production of an explanation for a user query on computing the MUS of a SAT problem. Thereafter, they pass the set of constraints in the explanation to an \gls{llm}, which is tasked with creating a textual explanation. Therefore, Vasileiou \etal~\cite{vasileiou2025trace} directly prompt an \gls{llm} to create a textual explanation without any knowledge about the relationships between the constraints involved in the explanation.
Notice that we evaluated that methodology (i.e., directly providing the \gls{iis} to the \gls{llm}) in our experiment in Figure \ref{fig:number_of_reasons}, showing that our approach (i.e., providing a \emph{structured} representation of the constraints instead of an \emph{unstructured} one, as done in \cite{vasileiou2025trace}) yields a measurable benefit ---especially for medium-large explanations, for which the \gls{llm} struggles to derive the relationships represented by the graph of reasons.


Several other works in the SAT literature discuss focus on techniques to extract multiple \glspl{mus} \cite{liffiton2005finding}, 
since an infeasible system of constraints may possess an exponential number of \glspl{mus} \cite{chakravarti1994some}. 
Along these lines, some methods have been proposed to select the \gls{mus} whose constraints are the most relevant according to user preferences \cite{junker2004quickxplain,gamba2023efficiently}. 
Furthermore, there has been an effort in computing the \gls{mus} with minimal cardinality, i.e., the \gls{mus} with a minimum number of constraints \cite{liffiton2009branch,ignatiev2015smallest}. 
Crucially, the above-mentioned works on \gls{mus} focus on the interpretability of the \emph{infeasibility} of a given \emph{SAT Problem} by extracting a \gls{mus}.
In this paper, we pursue a \emph{different} 
goal. 
We aim to provide \textit{textual explanations} by answering a user's query on the \emph{optimal} solution.
We do so by computing the \gls{iis} of a specially-defined \gls{csp} and identifying the \emph{reasons} that constitute such an explanation, which are then represented as a graph, and then used to build explanations with \glspl{llm}.

\section{Conclusions and Future Work}\label{sec:conclusion}

In this paper, we have tackled the fundamental question of explaining the outcomes of the optimisation processes that drive \acrfull{dss} in many real-world application domains.
Properly explaining these outcomes and answering users' questions is indeed fundamental to ensuring the human oversight required by modern \gls{ai} international regulations.
To achieve this goal, we have shown how we leveraged the recent advances of \glspl{llm} within our approach, namely \gls{exmip}.
However, despite their impressive performance across various language tasks, \glspl{llm} still struggle with abstract reasoning and are prone to producing incorrect and uninformative explanations. To overcome this, within \gls{exmip} we employ constraint reasoning techniques to distil the information necessary to produce the explanation, which is then passed as input to the \gls{llm} as the so-called \emph{graph of reasons}.
Such a graph captures both the semantics and the relationships among the constraints of the optimisation problem. 

We evaluate our approach on two different scheduling problems to showcase its generality and scalability. In detail, we use (i) a real-world instance of the \acrlong{dpp}, based on the Bachelor's Degree in Information and Communications Technology of the Western Sydney University, and (ii) realistic benchmark instances of the well-known \acrlong{rcpsp}. 
Our results demonstrate that providing the graph of reasons to an \gls{llm} significantly improves the quality of the generated textual explanations, both in terms of completeness and readability.
Furthermore, our results indicate that \gls{exmip} is capable of generating explanations in real time in most cases. For the \gls{rcpsp}, our method computes explanations much faster than solving hard instances of the optimisation problem. This represents an improvement over previous approaches in the literature \cite{georgara2022building,zehtabi2024contrastive}, which require re-solving the optimisation problem to generate explanations. 

As future work, we plan to conduct a user study to evaluate whether our explanations are useful to human users. Specifically, we aim to design a study in which bachelor's degree students ---the intended users of the application--- assess the satisfaction, usefulness, and completeness of the explanations provided to justify the recommended degree plans. These criteria are commonly used metrics for evaluating \gls{xai} systems \cite{hoffman2018metrics}. 
In addition, we intend to apply \gls{exmip} to other optimisation-based application domains such as \gls{rpp} \cite{lera2024robust} or value systems aggregation for policy makers \cite{lera2024aggregating}. 

\section*{Data Availability}
The data that support the findings of this study are openly available in Zenodo at \url{https://doi.org/10.5281/zenodo.19665733}, reference number \cite{lera_leri_2026_19665733}. 

\section*{Funding}
Roger X. Lera-Leri, Filippo Bistaffa, and Juan A. Rodríguez-Aguilar were supported by the research project ACISUD (PID2022-136787NB-I00). Athina Georgara is supported by Responsible Ai UK (\url{https://rai.ac.uk/RAiUK}) (EP/Y009800/1).

\section*{Competing Interests}
The authors declare no competing interests.

\bibliographystyle{IEEEtran}
\bibliography{sample}

\appendices

\renewcommand{\thetable}{A\arabic{table}}
\setcounter{table}{0}

\section{IIS example}\label{iis:example}

In Table \ref{tab:iis:example:math}, we provide the mathematical encoding of the constraints within the \gls{iis} of the running example in the left table of Figure \ref{fig:megafig}. Notice that each precedence constraint $c_5$, $c_6$, and $c_7$ is a system of multiple mathematical constraints, instead of a single mathematical constraint. We join them into a single restriction because we need to enforce multiple mathematical constraints per semester to represent the precedence relation of ``\textit{planning a subject after completing its prerequisite}'' (see \gls{dpp} formalisation in \eqref{eq:dpp_mip}).  

\begin{table}[ht]
    \centering
    
    \caption{$IIS$ from Example \ref{example:ASP}. We indicate the constraint type (in different colours) and the corresponding mathematical encoding.}
    \setlength{\tabcolsep}{6pt}
    \def\rowsep{5pt}
    \begin{tabularx}{\columnwidth}{ccc}
        \toprule
        Id & Type & Constraint Encoding \\
        \midrule
        $q$ & Query & $x_{PF,4} = 1$ \\[\rowsep]
        $c_1$ & \textcolor{olive}{Completion} & $\sum_{l=1}^{6} x_{PF,l} \leq 1$  \\[\rowsep]
        $c_2$ & \textcolor{orange}{Offer} & $\sum_{l=1,3,5} x_{PT,l} = 0$  \\[\rowsep]
        $c_3$ & \textcolor{orange}{Offer} & $\sum_{l=1,3,5} x_{OOP,l} = 0$  \\[\rowsep]
        $c_4$ & \textcolor{orange}{Offer} & $\sum_{l=2,4,6} x_{OSP,l} = 0$  \\[\rowsep]
        $c_5$ & \textcolor{blue}{Precedence} & $x_{PF,1} \geq x_{PT,2}, \sum_{l=1}^{3} x_{PF,l} \geq x_{PT,4}$ \\[\rowsep]
        $c_6$ & \textcolor{blue}{Precedence} & $x_{PF,1} \geq x_{OOP,2}, \sum_{l=1}^{3} x_{PF,l} \geq x_{OOP,4}$ \\[\rowsep]
        \multirow{2}{*}{$c_7$} & \multirow{2}{*}{\textcolor{blue}{Precedence}} & $\sum_{l=1}^{2}\sum_{u=\{PT,OOP\}}x_{u,l} \geq x_{OSP,3},$ \\[\rowsep] 
         & & $\sum_{l=1}^{4} \sum_{u=\{PT,OOP\}}x_{u,l} \geq x_{OSP,5}$ \\[\rowsep]
        $c_8$ & \textcolor{red}{Core} & $\sum_{l=1}^{6} x_{OSP,l} = 1$ \\
        \bottomrule
    \end{tabularx}
    
    \label{tab:iis:example:math}
\end{table}
\normalsize

\section{RCPSP Query Translation}\label{subsec:query_translation}

\begin{table*}[t]
    \setlength{\tabcolsep}{2pt}
    \caption{Set of possible user questions, query constraints and constraint encodings for the \gls{rcpsp} formalisation.}
    \label{tab:queries_rcpsp}
    
    \begin{tabularx}{\textwidth}{p{0.025\textwidth}p{0.37\textwidth}p{0.34\textwidth}Y}
        \toprule
        & \textbf{Question} & \textbf{Query constraints} & \textbf{Constraint encoding ($C_Q$)} \\
        \midrule
        Q1.& Why is activity $j$ completed at time $t$? & \textbf{veto} $j$ complete at time $t$ & ${x}_{j,t} = 0$ \\
        Q2.&  Why is activity $j$ \textbf{not} completed at time $t$? & \textbf{enforce} $j$ complete at time $t$ & ${x}_{j,t} = 1$ \\
        Q3.& Why is activity $j$ \textbf{not} completed before time $t$? & \textbf{enforce} $j$ complete before time $t$ & $\sum_{t'=EF_j}^{t-1}{x}_{j,t'} = 1$ \\
        Q4.& Why is activity $j$ \textbf{not} completed after time $t$? & \textbf{enforce} $j$ to complete after $t$ & $\sum_{t'=t+1}^{LF_j}{x}_{j,t'} = 1$ \\
        Q5.& Why is the group of activities $A$ completed at $t$? & \textbf{veto} to complete $j\in A$ at time $t$ & $\sum_{j\in A}{x}_{j,t} < |A|$ \\
        Q6.& Why is the group of activities $A$ \textbf{not} completed at $t$? & \textbf{enforce} to complete all of them at time $t$ & $\sum_{j\in A}{x}_{j,t} = |A|$ \\
        Q7.& Why is activity $j$ completed at $t$ instead of $t'$? & \textbf{enforce} complete $j$ at $t'$ and \textbf{veto} $j$ at $t$& ${x}_{j,t'} = 1$, ${x}_{j,t} = 0$ \\
        Q8.& Why is activity $j$ completed at $t$ instead of $j'$? & \textbf{enforce} $j'$ and \textbf{veto} $j$ complete at time $t$ & ${x}_{j',t} = 1$, ${x}_{j,t} = 0$ \\
        \bottomrule
    \end{tabularx}
\end{table*}
\normalsize

In Table \ref{tab:queries_rcpsp} we show possible user queries tailored to the \gls{rcpsp}. Analogously to the case of the \gls{dpp} in Table \ref{tab:queries_dpp}, we show their encoding to the \gls{mip} formalisation of the \gls{rcpsp}. 

\section{RCPSP Constraint Labels}\label{subsec:rcpsp_constraint_labels}

\begin{table}[ht]
    \caption{Templates for each type of \gls{rcpsp} constraints.}
    \centering
    \label{tab:templates_rcpsp}
    \setlength{\tabcolsep}{3pt}
    \begin{tabularx}{0.9\columnwidth}{ll}
        \toprule
        (i) \textcolor{olive}{Completion}: & ``\emph{Activity j must be completed}'' \\
        (ii) \textcolor{blue}{Precedence}: & ``\emph{Act. h must be completed before Act. j starts}'' \\
        (iii) \textcolor{red}{Resource}: & ``\emph{Res. r is scarce at time $t$ due to Acts. $i,\dots,j$}'' \\
        \bottomrule
    \end{tabularx}
\end{table}

In Table \ref{tab:templates_rcpsp} we show the natural language labels for each type of constraint of the \gls{rcpsp} \gls{mip} formalisation \eqref{prob:RCPSP}. 

\section{LLM Prompt}\label{sec:prompt}

\begin{table}[ht]
    \caption{Prompt using the Graph of Reasons to answer the query of the running example. }
    \centering
    \label{tab:prompt:gor}
    \setlength{\tabcolsep}{5pt}
    \footnotesize
    \begin{tabularx}{\columnwidth}{p{0.23\columnwidth}cccc}
        \multicolumn{5}{p{0.95\columnwidth}}{The bachelor's degree planning problem is the problem of finding a plan of university subjects that respects some precedence and university restrictions while maximising the skills learnt. Specifically, the goal is to acquire the required level for the student's desired job role through the course subjects. In this context, provide an explanation to answer the following question about the solution to the 6-semester bachelor's degree planning problem. To do so, use the reasons below, which are related to the bachelor's degree restrictions. Moreover, these reasons are connected to one another like nodes in a graph. Use the reasons and their connections to write the explanation. 
}  \\
        \vspace{5mm}\\
        \multicolumn{5}{p{0.99\columnwidth}}{Partial Solution: } \\
        
        \toprule
        Subject & Semester & Offer & Predecessors & Core \\
        \midrule
        Prog. Fund. & 2 (Spring) & Both & - & Yes \\
        Prog. Tech. & 4 (Spring) & Spring & PF & Yes \\
        Obj. Or. Prog. & $-$ & Spring & PF & No \\
        Oper. Syst. Prog. & 5 (Autumn) & Autumn & PT or OOP & Yes \\
        \bottomrule
        \vspace{5mm}\\
        \multicolumn{5}{p{0.95\columnwidth}}{Question: ``Why is Programming Fundamentals not planned at semester 4? ''} \\
        \vspace{5mm}\\
        \multicolumn{5}{p{0.95\columnwidth}}{Reasons:} \\ 
        \multicolumn{5}{p{0.95\columnwidth}}{0: Programming Fundamentals is not planned at semester 4. } \\ 
        \multicolumn{5}{p{0.95\columnwidth}}{1: Programming Fundamentals cannot be planned more than once. } \\ 
        \multicolumn{5}{p{0.95\columnwidth}}{2: Programming Fundamentals must be planned before Programming Techniques. } \\
        \multicolumn{5}{p{0.95\columnwidth}}{3: Programming Fundamentals must be planned before Object Oriented Programming. } \\ 
        \multicolumn{5}{p{0.95\columnwidth}}{4: Programming Techniques cannot be planned during autumn semesters. } \\
        \multicolumn{5}{p{0.95\columnwidth}}{5: Programming Techniques or Object Oriented Programming must be planned before Operating Systems Programming. } \\
        \multicolumn{5}{p{0.95\columnwidth}}{6: Object Oriented Programming cannot be planned during autumn semesters } \\
        \multicolumn{5}{p{0.95\columnwidth}}{7: Operating Systems Programming cannot be planned during spring semesters. } \\
        \multicolumn{5}{p{0.95\columnwidth}}{8: Operating Systems Programming is compulsory. } \\ 
        \vspace{5mm} \\
        \multicolumn{5}{p{0.95\columnwidth}}{Connections: } \\
        \multicolumn{5}{p{0.95\columnwidth}}{(0-1), (1-2), (1-3), (2-3), (2-4), (2-5), (3-5), (3-6), (4-5), (5-6), (5-7), (5-8), (7-8)} \\
        \vspace{5mm} \\
        \multicolumn{5}{p{0.95\columnwidth}}{Write a brief explanation without listing reasons and do not refer to the number of the reason. 
} \\ 
    \end{tabularx}
\end{table}
\normalsize

\begin{table}[ht]
    \caption{Prompt using the \gls{iis} (with their constraints translated) to answer the query of the running example. }
    \centering
    \label{tab:prompt:iis}
    \setlength{\tabcolsep}{5pt}
    \footnotesize
    \begin{tabularx}{\columnwidth}{p{0.23\columnwidth}cccc}
        \multicolumn{5}{p{0.95\columnwidth}}{The bachelor's degree planning problem is the problem of finding a plan of university subjects that respects some precedence and university restrictions while maximising the skills learnt. Specifically, the goal is to acquire the required level for the student's desired job role through the course subjects. In this context, provide an explanation to answer the following question about the solution to the 6-semester bachelor's degree planning problem. To do so, use the reasons below, which are related to the bachelor's degree restrictions.}  \\
        \vspace{5mm}\\
        \multicolumn{5}{p{0.99\columnwidth}}{Partial Solution: } \\
        
        \toprule
        Subject & Semester & Offer & Predecessors & Core \\
        \midrule
        Prog. Fund. & 2 (Spring) & Both & - & Yes \\
        Prog. Tech. & 4 (Spring) & Spring & PF & Yes \\
        Obj. Or. Prog. & $-$ & Spring & PF & No \\
        Oper. Syst. Prog. & 5 (Autumn) & Autumn & PT or OOP & Yes \\
        \bottomrule
        \vspace{5mm}\\
        \multicolumn{5}{p{0.95\columnwidth}}{Question: ``Why is Programming Fundamentals not planned at semester 4? ''} \\
        \vspace{5mm}\\
        \multicolumn{5}{p{0.95\columnwidth}}{Reasons:} \\ 
        \multicolumn{5}{p{0.95\columnwidth}}{0: Programming Fundamentals is not planned at semester 4. } \\ 
        \multicolumn{5}{p{0.95\columnwidth}}{1: Programming Fundamentals cannot be planned more than once. } \\ 
        \multicolumn{5}{p{0.95\columnwidth}}{2: Object Oriented Programming cannot be planned during autumn semesters. } \\
        \multicolumn{5}{p{0.95\columnwidth}}{3: Programming Techniques cannot be planned during autumn semesters. } \\ 
        \multicolumn{5}{p{0.95\columnwidth}}{4: Operating Systems Programming cannot be planned during spring semesters. } \\
        \multicolumn{5}{p{0.95\columnwidth}}{5: Programming Fundamentals must be planned before Object Oriented Programming. } \\
        \multicolumn{5}{p{0.95\columnwidth}}{6: Programming Fundamentals must be planned before Programming Techniques. } \\
        \multicolumn{5}{p{0.95\columnwidth}}{7: Programming Techniques or Object Oriented Programming must be planned before Operating Systems Programming. } \\
        \multicolumn{5}{p{0.95\columnwidth}}{8: Operating Systems Programming is compulsory. } \\ 
        \vspace{5mm} \\
        \multicolumn{5}{p{0.95\columnwidth}}{Write a brief explanation without listing reasons, and do not refer to the number of the reason. 
} \\ 
    \end{tabularx}
\end{table}
\normalsize

Now, we provide the prompt templates used to generate the explanations of the experimental evaluation (see Section \ref{sec:exp}). In detail, we provide two prompt templates: (i) a template that uses a graph of reason to build an explanation, and (ii) a template that uses the translation of the mathematical constraints of the \gls{iis}. 

Table \ref{tab:prompt:gor} shows a prompt using the graph of reasons to generate an explanation. Table \ref{tab:prompt:iis} shows a prompt using the translation of the constraints within the \gls{iis} to generate an explanation. Both prompts are used to generate explanations for the running example, shown in Table \ref{tab:explanations}. 
Notice that the only difference between the two prompts is that the prompt using the graph of reasons displays the relations between reasons. Additionally, the order of the reasons is different because, in the case of the graph of reasons, we organise the constraints according to their scope, as mentioned in Section \ref{subsec:ordering}. 
Lastly, in the case of the prompt using the graph of reasons, we add the sentence ``\textit{Moreover, such reasons are connected to each other like nodes in a graph. }'' to provide the necessary context to the \gls{llm}.

\end{document}